\newtheorem{theorem}{\textbf{Theorem}}
\newtheorem{lemma}{\textbf{Lemma}}
\newtheorem{remark}{\textbf{Remark}}
\newtheorem{assumption}{\textbf{Assumption}}
\newcommand{\etal}{\textit{et al.}}
\begin{document}

\title{ISFL: Federated Learning for Non-i.i.d. Data with Local Importance Sampling \thanks{Zheqi Zhu, Yuchen Shi, and Pingyi Fan are with the Department of Electronic Engineering, Tsinghua University, Beijing 10084, China (e-mail: zhuzq14@tsinghua.org.cn, shiyc21@mails.tsinghua.edu.cn, fpy@tsinghua.edu.cn).

        Chenghui Peng is with the Wireless Technology Laboratory, Huawei Technologies, Shenzhen 518129, China (e-mail: pengchenghui@huawei.com).

        Khaled B. Letaief is with the Department of Electronic and Computer Engineering, Hong Kong University of Science and Technology, Hong Kong (email: eekhaled@ust.hk).

        This work was supported by the National Key Research and Development Program of China (Grant NO.2021YFA1000500(4)). Khaled B. Letaief's work was supported in part by the Hong Kong Research Grants Council under the Areas of Excellence scheme grant AoE/E-601/22-R.}}


\author{Zheqi Zhu, Yuchen Shi, Pingyi Fan,~\IEEEmembership{Senior Member,~IEEE}, Chenghui Peng, and Khaled B. Letaief,~\IEEEmembership{Fellow,~IEEE}}


\maketitle
\IEEEpeerreviewmaketitle

\begin{abstract}
    As a promising learning paradigm integrating computation and communication, federated learning (FL) proceeds the local training and the periodic sharing from distributed clients. Due to the non-i.i.d. data distribution on clients, FL model suffers from the gradient diversity, poor performance, bad convergence, etc. In this work, we aim to tackle this key issue by adopting importance sampling (IS) for local training. We propose importance sampling federated learning (ISFL), an explicit framework with theoretical guarantees. Firstly, we derive the convergence theorem of ISFL to involve the effects of local importance sampling. Then, we formulate the problem of selecting optimal IS weights and obtain the theoretical solutions. We also employ a water-filling method to calculate the IS weights and develop the ISFL algorithms. The experimental results on CIFAR-10 fit the proposed theorems well and verify that ISFL reaps better performance, convergence, sampling efficiency, as well as explainability on non-i.i.d. data. To the best of our knowledge, ISFL is the first non-i.i.d. FL solution from the local sampling aspect which exhibits theoretical compatibility with neural network models. Furthermore, as a local sampling approach, ISFL can be easily migrated into other emerging FL frameworks.
\end{abstract}

\begin{IEEEkeywords}
    federated learning, importance sampling, non i.i.d data, water-filling optimization.
\end{IEEEkeywords}

\section{Introduction}
\subsection{Background}
%
As a novel paradigm for distributed intelligent system, federated learning (FL) has received great attention in both research and application areas \cite{mcmahan2017communication}. The core idea of FL is to share the models of distributed clients, which train models using their local data. Basic federated learning systems consist of two stages: 1) Local training, where clients update their local models with local data; and 2) Central aggregation, where the parameter server aggregates the models uploaded from the clients and sends the global models back. Different from the traditional distributed learning methods that split the models or transmit the data, model parameters or gradients are designed to be the elements for client interaction in FL schemes. FL is recognized as a communication-efficient scheme for its full use of the client-center (client-client) communication, clients' computation capability, and the distributed data sources\cite{konevcny2016federated, yang2019federated}.

The structures of FL naturally fit the distributed intelligent systems where computing is coupled with communication \cite{lim2020federated}. On the one hand, FL can serve the hierarchical computing-communication systems such as edge computing or fog computing, where a cloud center and distributed devices/clusters participate. A number of applications emerge especially in the mobile scenarios such as Artificial Intelligence of Things (AIoT) and vehicular internet \cite{niknam2020federated,nguyen2021federated, wu2022mobility}. The potentials of FL have been investigated in distributed inference \cite{li2020federated, zhu2023towards}, system optimization \cite{khan2020federated} and cooperative control \cite{wang2019adaptive, wu2023high}. In particular, due to the efficient sharing mechanism of FL, its combinations with reinforcement learning (RL) or multi-agent reinforcement learning (MARL) have been considered, which is referred to as federated RL (FRL) \cite{wang2019edge}. The corresponding frameworks have been designed for collaboration and scheduling of intelligent edge computing systems in \cite{yu2020deep,zhu2021federated,zhu2023fedlp}. On the other hand, distributed computing systems can also serve federated learning tasks. With the development of wireless communication, intelligent devices and the lightweight machine learning models, FL is envisioned as one of the core technologies in 6G systems \cite{letaief2019roadmap,yang2021federated}. These mobile systems support the FL to be deployed in numerous emerging applications, such as VR/AR, smart city, etc. \cite{yang2019federated1}.

Recently, FL has been evolving from many aspects and some learning schemes of traditional deep learning areas have also been migrated into FL. Furthermore, some theoretical studies on FL arise. For instance, Stich \cite{stich2018local} derived the convergence of the local SGD for the first time, and Wang \etal \cite{wang2021cooperative} bounded the gradient norms under several FL settings. Considering the communication factors, the joint analysis and the guide for system design are further studied in \cite{fallah2020personalized,wan2021convergence}. These theoretical analysis guarantees the interpretability, robustness, generalization, privacy protection of FL and makes such learning schemes trustworthy \cite{floridi2019establishing}.


\subsection{Motivation}
Distributed learning systems suffer from the heterogeneity of both the clients and data. Particularly, as mentioned in \cite{kairouz2021advances}, the heterogeneity of distributed data, also referred to as non-i.i.d. (independent and identically distributed) data settings, is one of the most crucial issues in FL. In this work, we consider a commonly acknowledged perspective of FL non-i.i.d. setting, the data category distribution skew, which is typically represented in real world datasets. For example, the label distributions of clients' training data make a difference on the global performance in classification tasks \cite{li2021federated}. Such non-i.i.d. data impacts FL from two aspects: Firstly, the data distributions highly differ for different clients, resulting in the divergence of local models; The second one is the imbalanced local data, which leads to bad generalization and the overfitting of the local models. Therefore, FL on non-i.i.d. data faces major challenges including slow convergence and unsatisfactory performance \cite{hsieh2020non}.

In conventional centralized deep learning, importance sampling (IS) has been regarded as an effective method to mitigate the data imbalance \cite{bugallo2017adaptive}. Both theoretical and experimental analysis has verified its improvement on accuracy \cite{katharopoulos2018not}, as well as training speed \cite{johnson2018training}.
In addition to investigating the importance sampling theoretically, these studies also provided new comprehensions of the optimal IS weights. For instance, iterative parameter estimation methods such as Monte Carlo based approaches can be adopted to estimate the weights \cite{doucet2005monte}. By figuring out the factors that determines the optimal sampling, the sampling parameters can be further combined with the RL to automatically learn the adaptive sampling strategies \cite{llorente2021survey}. In the context of FL, it is naturally to consider IS as a potential solution to improve the performance under non-i.i.d. data, which is caused by the imbalanced data distribution. Besides, the IS-based methods can be designed and implemented locally in a distributed manner. Therefore, introducing IS into non-i.i.d. FL is a promising direction without changing the backbone of FL procedures.


\subsection{Related Work}
In the literature, some related work on non-i.i.d. FL has been studied from different aspects. The popular solutions include data sharing, augmentation, model distillation, client selection, clustering, and sampling based methods. Here we mainly list the studies tightly related to ours.

As for FL with sampling approach to tackle non-i.i.d. data, Tudor \etal \cite{tuor2020data} proposed a method to identify the data relevance at each client, based on which the data are sampled for local training before the learning tasks start. To extend the dynamic sampling strategies, Li \etal \cite{li2021sample} measured each sample's importance by its gradient norms during the training and designed the FL algorithms with client+sample selection. Such data-driven method requires calculating the instant gradient norm of each training sample, which means that the extra backward propagation shall be processed locally. Another systematic work on importance sampling FL was done in \cite{rizk2020federated}, where both data and client sampling are considered. The importance weights of the mini-batches were derived theoretically and the convergence of algorithms was also analyzed. However, the theoretical guarantees were carried based on the convex-function assumption which is not compatible with deep learning models such as neural networks. Besides, the experiments were tested on a simple regression problem \cite{rizk2021optimal}. Thus, the theoretical analysis of importance sampling based FL in deep learning tasks is still an open problem.

There are other important theoretical results on non-i.i.d. FL. Li \etal \cite{li2019convergence} established the connections between convergence rate and the factors including federated period, gradient Lipschitz, gradient norms, etc. Zhao \etal \cite{zhao2018federated} derived the upper bounds of the parameter deviations between FL on non-i.i.d. data and the centralized training on full data. They also pointed out that the gradient Lipschitz of different categories shall be introduced to describe the non-i.i.d. impacts on the performance gaps. These previous works inspired us to start this work.

\subsection{Contributions \& Paper Organization}
The main contributions of this paper can be summarized as follows:
\begin{itemize}
        \item[$\bullet$] We put forward a generalized {\bf I}mportance {\bf S}ampling {\bf F}ederated {\bf L}earning framework, abbreviated as ISFL, which introduces the local importance sampling to mitigate the impacts of the non-i.i.d. data distribution.
        \item[$\bullet$] We derive the convergence theorem for ISFL, which generalizes the theoretical analysis of original FL studies. 1) ISFL convergence theorem provides a fine-granularity bound that introduces the effects of local importance sampling probabilities. 2) Compared to existing IS based solutions for non-i.i.d. FL, ISFL relaxes the convexity assumptions for better compatibility with deep learning models.
        \item[$\bullet$] We formulate the weight selection of each client as independent sub-problems. The theoretical solutions for the optimal sampling strategies, as well as an adaptive water-filling approach for calculating the optimum IS weights are also presented.
        \item[$\bullet$] We develop the corresponding ISFL algorithms\footnote[1]{The codes in this work are available at \url{https://github.com/Zhuzzq/ISFL}} and carry several experiments based on the CIFAR-10 dataset. It will be shown that the outcomes fit the theorems well and verify the improvement of the proposed algorithms. The experiments also suggest that ISFL is able to approach the accuracy of centralized training. Some insights on the generalization ability, personalized preference, interpretability, as well as data efficiency are also discussed.
\end{itemize}

The rest of the article is organized as follows. In Section \ref{section Framework}, we introduce some necessary preliminaries and the basic idea of ISFL. In Section \ref{section theory}, we derive the theoretical results on the framework analysis and the optimal weighting calculation. The corresponding ISFL algorithms are developed in Section \ref{section alg}. Experiment evaluations and further discussions are presented in Section \ref{section sim}. Finally, in Section \ref{section conclusion}, we conclude this work and point out several potential research directions.

\section{Preliminaries and ISFL Framework}
\label{section Framework}
In this section, we present important preliminaries and introduce local importance sampling into FL. We will then sketch the basic idea and the framework of ISFL. Some key notations are listed in Table \ref{notation}.

\renewcommand\arraystretch{1.3}
\begin{table}[htbp]
        \newcommand{\tabincell}[2]{\begin{tabular}{@{}#1@{}}#2\end{tabular}}
        \centering
        \caption{\upshape Main Notations.}
        \begin{tabular}{c|l}
                \hline
                \rowcolor{gray!20} Notations                         & Description                                             \\
                \hline
                $K$                                                  & The number of the distributed clients.                  \\
                $C$                                                  & The category number of training data.                   \\
                $E_l$                                                & Federated updating period.                              \\
                $\boldsymbol{\pi}=\left\{\pi_1,\cdots,\pi_K\right\}$ & Client weights for federated updating.                  \\
                $\eta$                                               & Learning rate of each client.                           \\
                $\mathcal{D}_k$                                      & Local training set on client $k$.                       \\
                $\xi_i=(x_i, y_i)$                                   & A training data sample of category $i$.                 \\
                $\boldsymbol{p}=\{p_i\}$                             & Global data proportion of category $i$.                 \\
                $\boldsymbol{p}^k=\{p^k_i\}$                         & Local proportion of category $i$ on client $k$.         \\
                $\boldsymbol{w}^k=\{w^k_i\}$                         & IS weights for category $i$ on client $k$.              \\
                $\bar{\boldsymbol{\theta}}_t$                        & Global model at $t$-th epoch.                           \\
                $\boldsymbol{\theta}^k_t$                            & Local model of client $k$.                              \\
                $\ell\left(\xi;\boldsymbol{\theta}\right)$           & Loss function of $\xi$ for model $\boldsymbol{\theta}$. \\

                \hline
        \end{tabular}
        \label{notation}
\end{table}
\subsection{Federated Learning for Non-i.i.d. Data}
By sharing the trained model parameters, FL serves as a communication-efficient paradigm for distributed learning.
The goal of FL is to obtain a global model which performs optimal global loss, i.e.,
\begin{equation}
        \label{g-loss}
        \ell(\bar{\boldsymbol{\theta}}_t):=\mathop{\mathbb{E}}\limits_{\scriptscriptstyle\xi\sim\boldsymbol{p}}\ell\big(\xi;\bar{\boldsymbol{\theta}}_t\big)=\sum_{i=1}^Cp_i\ell\big(\xi_i;\bar{\boldsymbol{\theta}}_t\big).
\end{equation}
The classical FL proceeds by merging the distributed models periodically to obtain the global model:
\begin{equation}
        \label{eq-fl}
        \bar{\boldsymbol{\theta}}_t=\sum\limits_{k=1}^K\pi_k\boldsymbol{\theta}^k_t\qquad{\rm if}\ t \equiv 0\mod{E_l}
\end{equation}
where $\pi_k$ is the weight\footnote[2]{Such weights are mainly considered in client selection. In the context of this work, we focus the weights of local training data.} for client $k$, and $\boldsymbol{\theta}_t^k$ is the local model of client $k$. For FedAvg \cite{mcmahan2017communication}, $\pi_k=\frac{|\mathcal{D}_k|}{\sum_{l=1}^K |\mathcal{D}_l|}$. All clients download the updated global model $\bar{\boldsymbol{\theta}}_t$ every $E_l$ epochs. Since FL conducts a parameter-level fusion, the local training significantly affects the global performance. Each client tends to update the model towards the local optimum which fits its own training data. Thus, one of the most crucial challenges for FL is the non-i.i.d. data distribution, which usually causes the local gradients and model parameters to diverge, as shown in the left of Fig. \ref{noniidsgd}. Compared to centralized training on whole data, the deviation of the distributed clients hinders the global model from reaching the optimum solution using the whole data, which also leads to poor performance, as well as bad convergence.
\begin{figure}[htbp]
        \centering
        \begin{minipage}[b]{0.48\textwidth}
                \includegraphics[width=1\textwidth]{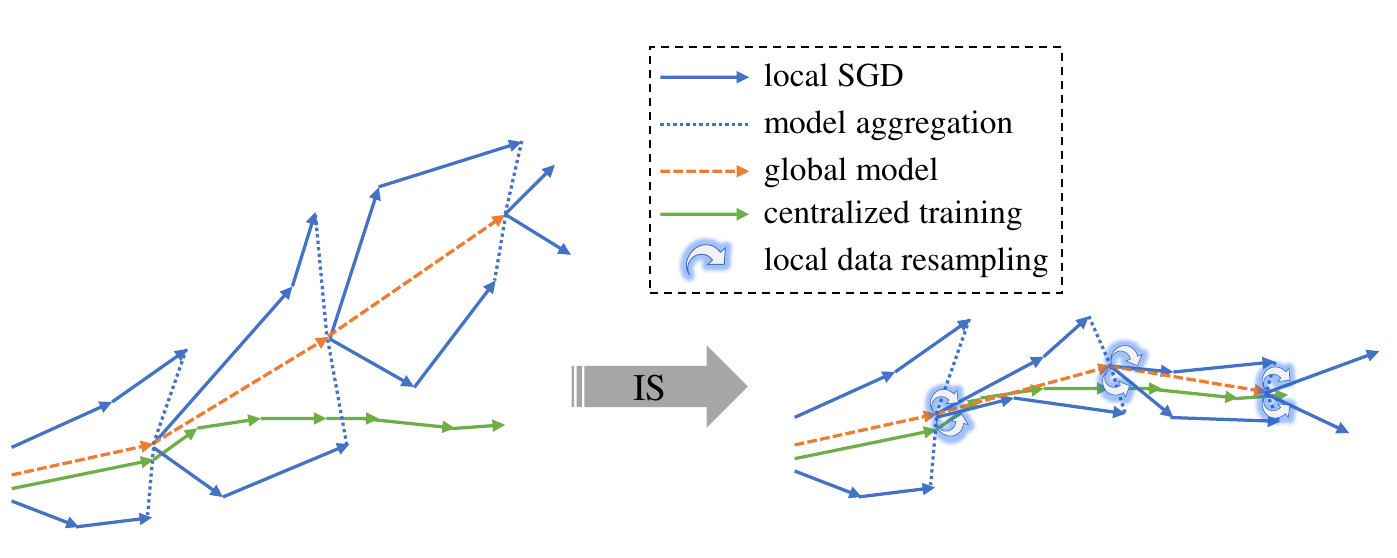}
        \end{minipage}%
        \caption{A sketch for the impact of non-i.i.d. data (w/ and w/o IS): FedAvg with $K=2$ and $E_l=2$. The arrows represent the model evolution.}
        \label{noniidsgd}
\end{figure}

\subsection{Importance Sampling}
Importance Sampling (IS) provides an intuitive solution to make up the gap between the observed distribution (can be obtained from observation) and the target distribution (inherent and latent) \cite{tokdar2010importance}. Consider a certain function $f(x)$ of a random variable $x$, and let $q(x)$ be the latent target distribution and $p(x)$ be the observed one. Through sampling the data according to IS weights $w(x)$ satisfying $\int w(x)p(x)dx=1$, the re-sampled observation can be expressed as:
\begin{equation}
        \label{eq-IS}
        \mathop{\mathbb{E}}\limits_{\scriptscriptstyle x\sim\boldsymbol{p}}\left[w(x)f(x)\right]:=\int p(x)w(x)f(x)dx.
\end{equation}
IS attempts to find proper weights which make the re-sampled observation close to the target expectation, i.e.,
\begin{equation}
        \label{eq-IS2}
        \mathop{\mathbb{E}}\limits_{\scriptscriptstyle x\sim\boldsymbol{q}}\left[f(x)\right]:=\int q(x)f(x)dx.
\end{equation}
Particularly, if $w(x)$ are set ideally as $\frac{q(x)}{p(x)}$, then the observed expectation with IS is equivalent to the objective. In general, iterative algorithms are adopted to estimate the optimal IS weights, as summarized in \cite{luengo2020survey}. Numerous varieties of IS algorithms have been developed and their applications in machine learning area have also been exploited. This inspires us to combine IS with local training to relieve the dilemma caused by non-i.i.d. data. As shown in the right of Fig. \ref{noniidsgd}, by conducting local IS each local epoch, the proper IS weights can guide the local training to be more consistent with the global model, which is expected to improve the performance of non-i.i.d. FL.

\subsection{ISFL Framework}
\begin{figure}[htbp]
        \centering
        \begin{minipage}[b]{0.48\textwidth}
                \includegraphics[width=1\textwidth]{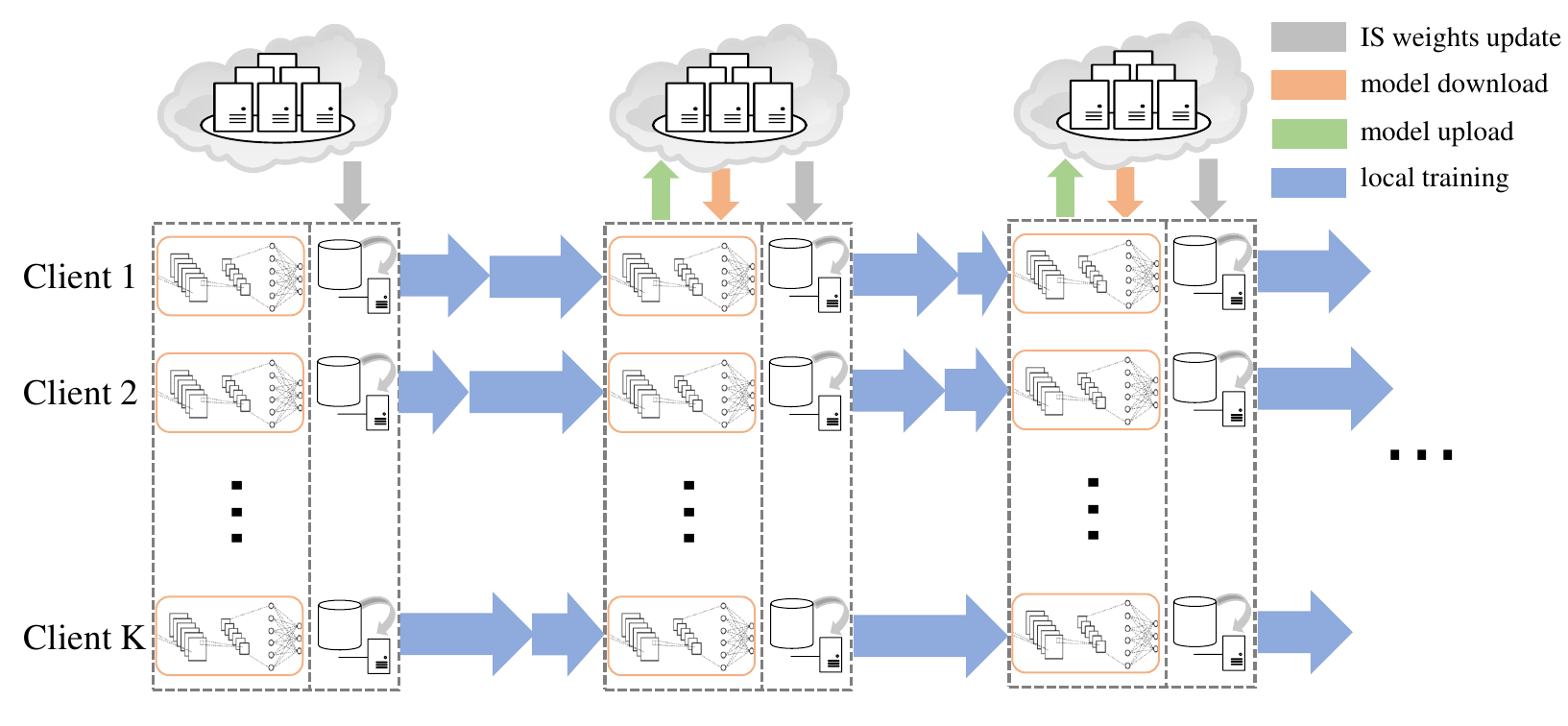}
        \end{minipage}%
        \caption{The workflow of ISFL framework.}
        \label{framework}
\end{figure}
As mentioned, the non-i.i.d. settings and the imbalanced distribution of the local data lead to the model divergence. We shall assume that there exists certain data sampling strategies from which all clients collaborate to make the global model reach a better performance. Hence, we introduce importance sampling for local training on each client and formulate FL with importance sampling, abbreviated as ISFL. In traditional FL, the clients sample their local training data uniformly to carry out the backward-propagation (BP) with the mini-batch SGD based optimizer. Thus, the sampling probabilities are $\boldsymbol{p}^k=\{p^k_i\}$ and the local SGD of a batch in a single iteration on client $k$ can be expressed as:
\begin{equation}
        \label{sgd-p}
        {\boldsymbol{\theta}}^k_{t+1}\leftarrow {\boldsymbol{\theta}}^k_{t}-\eta\sum_{i=1}^Cp^k_i\nabla\ell\big(\xi_i;{\boldsymbol{\theta}}^k_{t}\big).
\end{equation}
In ISFL, clients resample their local training data according to a set of IS weights $\{w^k_i\geq 0\}$, which satisfies $\sum\limits_{i=1}^C p_i^kw^k_i =1 $ for $k=1,\cdots,K$. Then, the local SGD in ISFL shall be changed into:
\begin{equation}
        \label{sgd-q}
        {\boldsymbol{\theta}}^k_{t+1}\leftarrow {\boldsymbol{\theta}}^k_{t}-\eta\sum_{i=1}^Cp^k_iw^k_i\nabla\ell\big(\xi_i;{\boldsymbol{\theta}}^k_{t}\big).
\end{equation}
For simplicity, we use $q^k_i=p_i^kw^k_i$ to represent the re-sampling probabilities in ISFL.
Furthermore, the non-i.i.d dilemma cannot be tackled by simply resampling with the probability $\boldsymbol{p}$ because the model aggregation interacts with the local training. Both procedures are coupled, which together affect the performance of FL. Therefore, the optimal local sampling strategies $\{\boldsymbol{q}^{k*}\}$ for each client are unintuitive and dynamic. The design principle of $\{\boldsymbol{q}^{k}\}$ shall be well formulated.

The basic workflow of ISFL is shown in Fig. \ref{framework}. Clients proceed their local training with the data sampled under the IS weights. The theorems in the next section will demonstrate that the optimal IS weights are related to not only the above two distributions, but also the model parameters. Since the local models change as the training goes on, the IS weights shall be updated iteratively. Thus, ISFL conducts IS weight updating synchronously with the model sharing at every federated round.

\section{Theoretical Results}
\label{section theory}
In this section, we will firstly generalize the FL convergence analysis by introducing local IS to FL. Secondly, based on the convergence theorem, the optimal IS weights are obtained theoretically.

\subsection{Convergence Analysis with Local Importance Sampling}
Similar to most of the convergence theorems, we first state the following assumptions which are commonly exploited to support the theoretical analysis. Particularly, to improve the theoretical compatibility with neural networks, we relax the assumption of the loss functions' convexity or $\mu$-strong convexity, which are widely used in some studies.
\begin{assumption}[Conditional L-smoothness \cite{wang2021cooperative, zhao2018federated}]
    \label{ass1}
    During a given continuous training period $\mathcal{T}$ and for training samples $\xi_i$ of category $i$, the gradient of the model is Lipschitz continuous, i.e., the loss function satisfies:
    \begin{equation}
        \label{ass1eq}
        \left\|\nabla\ell\big(\xi_i;\boldsymbol{\theta}\big)-\nabla\ell\big(\xi_i;\boldsymbol{\theta}'\big)\right\|\leq L_i(\mathcal{T})\left\|\boldsymbol{\theta}-\boldsymbol{\theta}'\right\|,
    \end{equation}
    where $L_i(\mathcal{T})$ is termed as the (empirical) gradient Lipschitz.
\end{assumption}
Note that there does not exist the invariant gradient Lipschitz constant for deep learning models such as neural networks. Thus, the assumption allows the gradient Lipschitz to vary as the federated updating proceeds.
Furthermore, for given dataset $\mathcal{D}$, the empirical gradient Lipschitz between local model $\boldsymbol{\theta}^k_t$ and global model $\bar{\boldsymbol{\theta}}_t$ at every global epoch can be defined as:
\begin{equation}
    \label{def-eq-L}
    L_{k,i}(t):=\max_{\xi_i\in\mathcal{D}}\ \frac{\Big\|\nabla\ell\big(\xi_i;\boldsymbol{\theta}^k_t\big)-\nabla\ell\big(\xi_i;\bar{\boldsymbol{\theta}}_t\big)\Big\|}{\big\|\boldsymbol{\theta}^k_t-\bar{\boldsymbol{\theta}}_t\big\|},
\end{equation}
when $t=mE_l$.
\begin{assumption}[Bounded SGD variance \cite{stich2018local,wang2021cooperative}]
    \label{ass2}
    The stochastic gradient variance of each model $\boldsymbol{\theta}^k_t$ at epoch $t$ is bounded by:
    \begin{equation}
        \label{ass2eq}
        \mathbb E \left[\big\|\tilde{\boldsymbol{g}}^{k}_t-\boldsymbol{g}^{k}_t\big\|^2\right]\leq \sigma_k^2(t),\qquad \mathbb E\ \|\tilde{\boldsymbol{g}}^{k}_t\|\leq G(t)
    \end{equation}
    where $\tilde{\boldsymbol{g}}^{k}_t$ is the stochastic gradient, $\boldsymbol{g}^k_t=\mathbb E\ \tilde{\boldsymbol{g}}^k_t$ is its expectation. $\sigma_k^2(t)$ and $G(t)$ are the bounds of variance and expectation, respectively.
\end{assumption}

With the above basic assumptions, we are now ready to investigate the convergence of ISFL in the following theorem.

\begin{theorem}[Convergence analysis]
    \label{thm1-bound}
    Consider the given $T$-step range $\mathcal{T}$ from $T_0$ to $T_1$. By setting proper $\eta$, the expectation gradient norm of ISFL with IS probabilities $\{q^k_i\}$ can be upper bounded by:
    \begin{align}
        \label{thm1-ineq}
        \frac{1}{T}\sum_{t=T_0}^{T_1} & \big\|\nabla\ell(\bar{\boldsymbol{\theta}}_{t})\big\|^2\leq  \frac{2\big(\ell(\bar{\boldsymbol{\theta}}_{T_0})-\ell^*\big)}{\eta T}+\psi(\mathcal{T})\notag \\
                                      & \qquad +\frac{2\eta^2E_l}{T}\sum_{k=1}^K\Big(\pi_k\rho_\mathcal{T}(\boldsymbol{q}^k)\sum_{t=T_0}^{T_1}\phi_k(t)\Big),
    \end{align}
    where
    \begin{equation}
        \label{thm1-eq1}
        \psi(\mathcal{T})=\frac{1}{T}\sum\limits_{t=T_0}^{T_1}\Big[\eta\bar{L}_\mathcal{T}\sum_{k=1}^K\pi_k\sigma_k^2(t) +2CG^2(t)\Big],
    \end{equation}
    \begin{equation}
        \label{thm1-eq2}
        \phi_k(t)=\sum\limits_{\tau=t_c}^{t-1}\Big[2G^2(\tau)+\sigma_k^2(\tau) +\sum\limits_{l=1}^K\pi_l\sigma_l^2(\tau)\Big],
    \end{equation}
    \begin{equation}
        \label{thm1-eq3}
        \rho_{\mathcal{T}}(\boldsymbol{q}^k)=\Big(1+\sum_i(p_i-q^k_i)^2\Big)\Big(\sum_{i} q^k_iL^2_{k,i}(\mathcal{T})\Big).
    \end{equation}
    Therein, $\bar{L}_\mathcal{T}=\sum_i p_i L_{i}(\mathcal{T})$ is the average gradient Lipschitz of the global model, and $t_c=\lfloor\frac{t}{E_l}\rfloor\cdot E_l$ is the latest epoch of model aggregation.
\end{theorem}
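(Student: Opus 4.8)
The plan is to follow the standard template for non-convex stochastic-gradient convergence, adapted to the federated setting with local importance sampling. The backbone is a per-step descent inequality obtained from the conditional $L$-smoothness of Assumption \ref{ass1}, applied to the \emph{virtual} global sequence $\bar{\boldsymbol{\theta}}_t := \sum_{k=1}^K \pi_k \boldsymbol{\theta}^k_t$, which I would define for \emph{every} $t$, not merely at the aggregation epochs where it agrees with \eqref{eq-fl} (at those epochs the local models reset to $\bar{\boldsymbol{\theta}}_t$, so the virtual sequence has no jump). Smoothness gives
\[
\ell(\bar{\boldsymbol{\theta}}_{t+1}) \le \ell(\bar{\boldsymbol{\theta}}_t) + \big\langle \nabla\ell(\bar{\boldsymbol{\theta}}_t),\, \bar{\boldsymbol{\theta}}_{t+1}-\bar{\boldsymbol{\theta}}_t\big\rangle + \tfrac{\bar{L}_{\mathcal{T}}}{2}\big\|\bar{\boldsymbol{\theta}}_{t+1}-\bar{\boldsymbol{\theta}}_t\big\|^2 ,
\]
into which I would substitute the averaged ISFL update, $\bar{\boldsymbol{\theta}}_{t+1}-\bar{\boldsymbol{\theta}}_t = -\eta\sum_k\pi_k\sum_i q^k_i\,\tilde{\boldsymbol{g}}^k_{t,i}$ with $\tilde{\boldsymbol{g}}^k_{t,i}$ the stochastic gradient for category $i$ on client $k$ (cf. \eqref{sgd-q}), and then take expectations so that each $\tilde{\boldsymbol{g}}^k_{t,i}$ is replaced by $\nabla\ell(\xi_i;\boldsymbol{\theta}^k_t)$ up to residuals controlled by $\sigma_k^2(t)$ and $G(t)$ from Assumption \ref{ass2}.

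First I would process the inner-product term by inserting and subtracting the ``ideal'' gradient $\nabla\ell(\bar{\boldsymbol{\theta}}_t)=\sum_i p_i\nabla\ell(\xi_i;\bar{\boldsymbol{\theta}}_t)$, so that $-\eta\langle\nabla\ell(\bar{\boldsymbol{\theta}}_t),\cdot\rangle$ produces the descent term $-\tfrac{\eta}{2}\|\nabla\ell(\bar{\boldsymbol{\theta}}_t)\|^2$ plus an error $\tfrac{\eta}{2}\|e_t\|^2$ (Young's inequality), where $e_t=\sum_k\pi_k\big(\sum_i q^k_i\nabla\ell(\xi_i;\boldsymbol{\theta}^k_t)-\nabla\ell(\bar{\boldsymbol{\theta}}_t)\big)$. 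Two error sources surface inside $\|e_t\|^2$ after Jensen over $\boldsymbol{\pi}$: (i) the \emph{client drift} $\boldsymbol{\theta}^k_t-\bar{\boldsymbol{\theta}}_t$, which via Assumption \ref{ass1} and Jensen over the probability vector $\boldsymbol{q}^k$ yields $\big\|\sum_i q^k_i(\nabla\ell(\xi_i;\boldsymbol{\theta}^k_t)-\nabla\ell(\xi_i;\bar{\boldsymbol{\theta}}_t))\big\|^2 \le \big(\sum_i q^k_i L^2_{k,i}(\mathcal{T})\big)\|\boldsymbol{\theta}^k_t-\bar{\boldsymbol{\theta}}_t\|^2$; and (ii) the \emph{sampling mismatch} $\sum_i(q^k_i-p_i)\nabla\ell(\xi_i;\bar{\boldsymbol{\theta}}_t)$, bounded by Cauchy--Schwarz as $\big(\sum_i(p_i-q^k_i)^2\big)\big(\sum_i\|\nabla\ell(\xi_i;\bar{\boldsymbol{\theta}}_t)\|^2\big)\le \big(\sum_i(p_i-q^k_i)^2\big)CG^2(t)$, the $CG^2$ part feeding the $2CG^2(t)$ in $\psi(\mathcal{T})$ of \eqref{thm1-eq1}. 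Combining (i) and (ii) in the squared norm produces exactly the composite factor $\big(1+\sum_i(p_i-q^k_i)^2\big)\big(\sum_i q^k_i L^2_{k,i}(\mathcal{T})\big)=\rho_{\mathcal{T}}(\boldsymbol{q}^k)$ of \eqref{thm1-eq3} multiplying $\|\boldsymbol{\theta}^k_t-\bar{\boldsymbol{\theta}}_t\|^2$; the curvature term $\tfrac{\bar{L}_{\mathcal{T}}}{2}\|\bar{\boldsymbol{\theta}}_{t+1}-\bar{\boldsymbol{\theta}}_t\|^2$ contributes the remaining variance pieces $\eta\bar{L}_{\mathcal{T}}\sum_k\pi_k\sigma_k^2(t)$ of $\psi(\mathcal{T})$ and an $O(\eta^2)\|\nabla\ell(\bar{\boldsymbol{\theta}}_t)\|^2$ term to be absorbed later.

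Next I would bound the accumulated client drift. Since client $k$ has run plain local SGD since the latest aggregation epoch $t_c=\lfloor t/E_l\rfloor E_l$, where $\boldsymbol{\theta}^k_{t_c}=\bar{\boldsymbol{\theta}}_{t_c}$, unrolling \eqref{sgd-q} gives $\boldsymbol{\theta}^k_t-\bar{\boldsymbol{\theta}}_t=-\eta\sum_{\tau=t_c}^{t-1}\big(\text{local step of client }k\text{ at }\tau - \text{averaged step at }\tau\big)$; squaring, applying Jensen/Cauchy--Schwarz over the at most $E_l$ summands (this is where the leading $E_l$ in \eqref{thm1-ineq} comes from), bounding each gradient by $G(\tau)$ and each stochastic fluctuation by $\sigma_k^2(\tau)$ or $\sum_l\pi_l\sigma_l^2(\tau)$, yields $\mathbb E\|\boldsymbol{\theta}^k_t-\bar{\boldsymbol{\theta}}_t\|^2 \lesssim \eta^2 E_l\,\phi_k(t)$ with $\phi_k(t)$ exactly as in \eqref{thm1-eq2}, the term $(K+1)G^2(\tau)+\sigma_k^2(\tau)+\sum_l\pi_l\sigma_l^2(\tau)$ recording both the client's own noise and the noise inherited through the previous global model. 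Substituting this back, summing the descent inequality over $t=T_0,\dots,T_1$, telescoping $\sum_t\big(\ell(\bar{\boldsymbol{\theta}}_{t+1})-\ell(\bar{\boldsymbol{\theta}}_t)\big)\ge \ell^*-\ell(\bar{\boldsymbol{\theta}}_{T_0})$, dividing by $\eta T$ and rearranging to isolate $\tfrac{1}{T}\sum_t\|\nabla\ell(\bar{\boldsymbol{\theta}}_t)\|^2$ on the left gives \eqref{thm1-ineq}, with the drift term now of the advertised order $\tfrac{2\eta^2 E_l}{T}\sum_k\pi_k\rho_{\mathcal{T}}(\boldsymbol{q}^k)\sum_t\phi_k(t)$.

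The step I expect to be the main obstacle is this drift estimate and its clean coupling with the importance weights: one has to be careful that the $\boldsymbol{q}^k$-weighted average of per-category gradients is correctly handled by Jensen (so that the $L^2_{k,i}$'s appear weighted by $q^k_i$ inside $\rho_{\mathcal{T}}$ rather than by $p_i$), and that the unrolled recursion for $\boldsymbol{\theta}^k_t-\bar{\boldsymbol{\theta}}_t$ keeps the noise bookkeeping tight enough to land exactly on $\phi_k(t)$. Finally, ``proper $\eta$'' is just the smallness condition --- concretely $\eta\lesssim 1/\bar{L}_{\mathcal{T}}$, e.g. $\tfrac{\bar{L}_{\mathcal{T}}\eta}{2}\le\tfrac{1}{4}$ together with a bound making the $O(\eta^2)$ self-terms subdominant --- under which the stray $\|\nabla\ell(\bar{\boldsymbol{\theta}}_t)\|^2$ contributions from the curvature and update terms can be moved to the left and the coefficient there stays bounded away from zero.
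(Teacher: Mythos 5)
Your proposal is correct and follows essentially the same route as the paper's proof: apply conditional $L$-smoothness to the virtual averaged iterate, split the inner-product term into a descent term plus a deviation whose bound factors into the $\boldsymbol{q}^k$-weighted Lipschitz drift part and the sampling-mismatch part (yielding $\rho_{\mathcal{T}}(\boldsymbol{q}^k)$ and the $CG^2$ piece of $\psi$), control the client drift by unrolling the local updates since $t_c$ to get $2\eta^2E_l\phi_k(t)$, and then telescope and rearrange under $\eta\bar{L}_{\mathcal{T}}\leq 1$. The only cosmetic difference is that you invoke Young's inequality and absorb the stray $O(\eta^2)\|\nabla\ell\|^2$ terms, whereas the paper uses the exact identity $2\langle a,b\rangle=\|a\|^2+\|b\|^2-\|a-b\|^2$ so the squared-mean terms cancel directly against the curvature term.
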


The full proof of Theorem \ref{thm1-bound} is presented in Appendix \ref{appendix A}.

\begin{remark}[Interpretation of the bound]
    \label{rmk1}
    Theorem \ref{thm1-bound} extends the FL convergence results by considering the effects of local importance sampling. Similar to some existing studies such as \cite{wang2021cooperative} and \cite{haddadpour2019convergence}, the divergence of gradient norm comes from 3 parts: 1) The first term in RHS of \eqref{thm1-ineq} is the related to the remaining gap with the optimum loss function after previous $T_0$ round federated learning; 2) The second term $\psi(t)$ is the gradient deviation caused by the subsequent model aggregation; 3) The third term mainly comes from the local training with IS and model aggregation during the subsequent progress. Particularly, the modified multiplier $\rho_{\mathcal{T}}(\boldsymbol{q}^k)$ illustrates the influence of the local IS under $\{\boldsymbol{q}^k\}$.
\end{remark}

\begin{remark}[The differences with general convergence bounds of non-i.i.d. FL]
    \label{rmk2-thm1}
    Related studies on the convergence of non-i.i.d. FL have been also investigated. To measure the impacts of the non-i.i.d. data, several statistic metrics were developed, such as 1) the MSE (mean square error) between the local gradients and the global ones, i.e., $\frac{1}{K}\Vert\nabla\ell(\boldsymbol{\theta}^k)-\nabla\ell(\bar{\boldsymbol{\theta}})\Vert^2$ in \cite{wang2021cooperative} and 2) the weighted gradient diversity $\frac{\sum_k \pi_k\Vert\nabla\ell(\boldsymbol{\theta}^k)\Vert^2}{\Vert\sum_k \pi_k\nabla\ell(\boldsymbol{\theta}^k)\Vert^2}$ defined in \cite{haddadpour2019convergence}. Both metrics are bounded as an assumption in their further convergence analysis. The convergence results only imply the relations to the gradient-based statistics, which cannot directly reflect how the data distribution impacts the FL convergence. In contrast, Theorem \ref{thm1-bound} provides a more fine-grained bound involving the proportion of each specific data categories, through which it can be intuitively explored why imbalanced data distribution damages the convergence. Besides, Theorem \ref{thm1-bound} also suggests that the local IS strategies should be designed according to the global data distribution and the gradient Lipschitz, which will be discussed in the next subsection.
\end{remark}

Theorem \ref{thm1-bound} generalizes the original convergence analysis of FL by considering the impacts of local importance sampling. The result is also intuitive since the local importance sampling shows its effect for subsequent local training and model aggregation by multiplying the factor $\rho_{\mathcal{T}}(\boldsymbol{q}^k)$ on the original deviations. Notably, the third term of the upper bound in \eqref{thm1-ineq} is related to clients' local sampling probabilities $\{\boldsymbol{q}^k\}$. Thus, we shall mainly focus on this term in the next to investigate the theoretically optimal sampling strategies.

\subsection{Optimal Importance Sampling Strategies}
The aforementioned theorem implies how local IS probabilities affect the ISFL convergence rate. Based on Theorem \ref{thm1-bound}, we can formulate the optimization problem to minimize the upper bound for better model performance. Firstly, \eqref{thm1-ineq} implies that for given participation set and client weights, the upper bound of the average gradient norm is mainly affected by the local sampling strategies. Then, the selection of the clients and the training samples can be optimized independently and integrated to jointly improve the FL performance on non-i.i.d. data. Secondly, note that with the client weights $\{\pi_k\}$ fixed, only the third term in \eqref{thm1-ineq} is related to the local IS probabilities $\{\boldsymbol{q}^k\}$. Therefore, the solutions to optimal local sampling strategies shall be handled independently on each client. In other words, the general optimization problem of IS can be decomposed into $K$ distributed sub-problems minimizing $\rho_{\mathcal{T}}(\boldsymbol{q}^k)$. Besides, to ensure that samples of each category are sampled at least with a low probability, we set the lowest IS weight $\varpi$ and guarantee the condition $q_i^k\geq \varpi p_i^k$. Then, we are able to formulate the decomposed optimization problems for the $T$-step period $\mathcal{T}$ as follows.
\begin{align}
    \mathcal P^k_{\mathcal{T}}:\quad & \min\limits_{\boldsymbol{q}}\quad \rho_{\mathcal{T}}(\boldsymbol{q}^k)\label{P} \\
                                     & s.t.\quad
    \begin{cases}
        \sum\limits_{i=1}^C q_i^k=1 \tag{\ref{P}{a}}\label{Pa} &                    \\
        q_i^k\geq\varpi\cdot p_i^k                             & \quad i=1,\cdots,C
    \end{cases}
\end{align}

For each client $k$, its optimal local IS probabilities during $\mathcal{T}$ can be obtained by solving the distributed optimization problem $\mathcal P^k_{\mathcal{T}}$. The following theorem gives the theoretical solutions of the optimal IS strategies on each client.
\begin{theorem}[Optimal IS Probabilities]
    \label{thm2-opt}
    For given period $\mathcal{T}$, the optimal probabilities of local importance sampling for client $k$ are:
    \begin{equation}
        \label{thm2-q_star}
        q_j^{k*}=\max\Bigl\{\varpi p_j^k,p_j+\alpha^k_{j}(\mathcal{T})\cdot\Gamma_k(\lambda)\Bigr\},
    \end{equation}
    where
    \begin{equation}
        \label{thm2-alpha}
        \alpha^k_j(\mathcal{T})=\frac{1-\frac{CL_{k,j}^2(\mathcal{T})}{\sum_iL_{k,i}^2(\mathcal{T})}}{\sqrt{\sum\limits_{m=1}^C \Big(1-\frac{CL_{k,m}^2(\mathcal{T})}{\sum_iL_{k,i}^2(\mathcal{T})}\Big)^2}}
    \end{equation}
    is a zero-sum factor representing the gap between the empirical gradient Lipschitz of category $j$ and the average value. $\Gamma_k(\lambda)=\sqrt{\frac{C\lambda}{\sum_iL^2_{k,i}(\mathcal{T})}-1}$ is a factor related to the Lagrangian multiplier $\lambda$.
\end{theorem}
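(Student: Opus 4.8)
The plan is to solve the decomposed problem $\mathcal P^k_{\mathcal T}$ in \eqref{P} by Karush--Kuhn--Tucker (KKT) analysis: first treat the relaxation that keeps only the simplex constraint $\sum_i q_i^k=1$ and discards the box constraints $q_i^k\ge\varpi p_i^k$, and afterwards reinstate the box constraints through their active set. Writing the objective \eqref{thm1-eq3} as a product $\rho_{\mathcal T}(\boldsymbol q^k)=A(\boldsymbol q^k)\,B(\boldsymbol q^k)$ with $A(\boldsymbol q^k)=1+\sum_i(p_i-q_i^k)^2$ (strictly convex, $\ge 1$) and $B(\boldsymbol q^k)=\sum_i q_i^k L_{k,i}^2(\mathcal T)$ (linear, positive on the feasible set), I would introduce a multiplier $\lambda$ for $\sum_i q_i^k=1$ and write stationarity $\partial_{q_j}(AB)=\lambda$, which using $\partial_{q_j}A=2(q_j^k-p_j)$ and $\partial_{q_j}B=L_{k,j}^2(\mathcal T)$ becomes $2B(q_j^k-p_j)+A\,L_{k,j}^2(\mathcal T)=\lambda$, hence $q_j^k=p_j+\frac{\lambda-A L_{k,j}^2(\mathcal T)}{2B}$.

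The next step is to eliminate the scalars $A$, $B$, $\lambda$ in favor of the quantities appearing in the theorem. Summing the stationarity relation over $j=1,\dots,C$ and using $\sum_j p_j=\sum_j q_j^k=1$ forces $C\lambda=A\sum_i L_{k,i}^2(\mathcal T)$; in particular $\frac{C\lambda}{\sum_i L_{k,i}^2(\mathcal T)}=A=1+\sum_i(p_i-q_i^k)^2$, so $\Gamma_k(\lambda)^2+1=A$, i.e.\ $\Gamma_k(\lambda)=\sqrt{\sum_i(p_i-q_i^k)^2}$. Substituting $\lambda$ back gives $q_j^k-p_j=\frac{A\sum_i L_{k,i}^2(\mathcal T)}{2BC}\bigl(1-\tfrac{C L_{k,j}^2(\mathcal T)}{\sum_i L_{k,i}^2(\mathcal T)}\bigr)$, so $\boldsymbol q^k-\boldsymbol p$ is a positive scalar multiple of the Lipschitz-gap vector with entries $1-\tfrac{C L_{k,j}^2(\mathcal T)}{\sum_i L_{k,i}^2(\mathcal T)}$, whose coordinates sum to zero. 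Normalizing that vector to unit Euclidean norm identifies its direction with $\alpha_j^k(\mathcal T)$ of \eqref{thm2-alpha}; because $\sum_j(\alpha_j^k(\mathcal T))^2=1$, the remaining scalar factor equals $\sqrt{\sum_i(q_i^k-p_i)^2}$, which by the identity above equals $\Gamma_k(\lambda)$. This yields the interior solution $q_j^k=p_j+\alpha_j^k(\mathcal T)\,\Gamma_k(\lambda)$.

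To restore the constraints $q_i^k\ge\varpi p_i^k$, I would invoke the full KKT system: at optimality the categories split into an active set where $q_i^{k*}=\varpi p_i^k$ and an inactive set where the relaxed stationarity argument applies with the simplex budget decreased by the mass assigned to the active set and with $\lambda$ the corresponding multiplier; clipping the interior expression from below and letting $\Gamma_k(\lambda)$ (the ``water level'') absorb the budget correction packages both cases into $q_j^{k*}=\max\{\varpi p_j^k,\ p_j+\alpha_j^k(\mathcal T)\,\Gamma_k(\lambda)\}$ of \eqref{thm2-q_star}, with the actual partition---equivalently the value of $\lambda$---being precisely what the water-filling procedure of Section~\ref{section alg} determines.

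The main obstacle is global optimality: $\rho_{\mathcal T}=AB$ is a product of a convex and a linear map and need not be jointly convex, so a KKT point is not automatically the minimizer. I plan to resolve this by the standard two-stage reformulation: for fixed $s=B(\boldsymbol q^k)$, minimize the strictly convex quadratic $A(\boldsymbol q^k)$ over the convex slice $\{\,\boldsymbol q^k:\ \sum_i q_i^k=1,\ B(\boldsymbol q^k)=s,\ q_i^k\ge\varpi p_i^k\,\}$ to obtain a unique $\boldsymbol q^k(s)$, then minimize the scalar function $s\mapsto A(\boldsymbol q^k(s))\,s$ over $s$; checking that its unique stationary point satisfies the fixed-point relation $\lambda=\frac{A}{C}\sum_i L_{k,i}^2(\mathcal T)$ derived above confirms that the KKT point of the product problem is the global minimizer. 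A secondary subtlety is that \eqref{thm2-q_star} is self-referential (its right-hand side involves $\Gamma_k(\lambda)$, and $\lambda$ depends on $\boldsymbol q^{k*}$ through $A$), so I would present it as an implicit characterization whose consistent value of $\lambda$ is exactly the fixed point computed by the water-filling iteration.
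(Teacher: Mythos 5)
Your proposal follows essentially the same route as the paper's proof: the same split $\rho_{\mathcal T}(\boldsymbol q^k)=A(\boldsymbol q^k)\,B(\boldsymbol q^k)$, the same Lagrangian/KKT stationarity $2B(q_j^k-p_j)+A\,L_{k,j}^2(\mathcal T)=\lambda+\mu_j$, the same summation over $j$ to obtain $A=\frac{C\lambda}{\sum_i L_{k,i}^2(\mathcal T)}$ and hence $q_j^{k}=p_j+\alpha_j^k(\mathcal T)\Gamma_k(\lambda)$, and the same water-filling reinstatement of the bounds $q_j^k\ge\varpi p_j^k$. Your added concern about global optimality is, if anything, more careful than the paper, which simply asserts convexity of the product objective (a convex-times-linear product need not be jointly convex), so your two-stage verification strengthens rather than departs from the published argument.
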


See the proof in Appendix \ref{appendix B-Thm2}.

Theorem \ref{thm2-opt} indicates that the optimal IS probabilities are determined by the global category distribution plus a zero-sum factor related to the gradient Lipschitz. In general, ISFL up-samples the category with a large global proportion or a small gradient Lipschitz. The principle of the optimal local IS strategies can be explained as follows. Two kinds of data categories shall be used more frequently: 1) The categories with larger proportion possess more data samples, which helps the model learn more patterns and improves the local performance; 2) On the other hand, categories with small gradient Lipschitz will benefit the global model convergence at federated epochs. Therefore, in FL for non-i.i.d. data, instead of intuitively resampling the data according to the global distribution (up-sample the smaller proportion and down-sample the higher), we should also consider the offset term in \eqref{thm2prf-qstar}, which reflects the impacts of the gradient deviation and aggregation. Overall, the solutions to the sub-problems $\mathcal P^k_{\mathcal{T}}$ is to resample the local training data with the IS weights $\{q_i^{k*}/p^k_i\}$. Moreover, the exact values of the optimal IS weights can be solved by a water-filling approach as shown in Remark \ref{rmk-thm2}.

\begin{figure}[htbp]
    \centering
    \begin{minipage}[b]{0.46\textwidth}
        \includegraphics[width=1\textwidth]{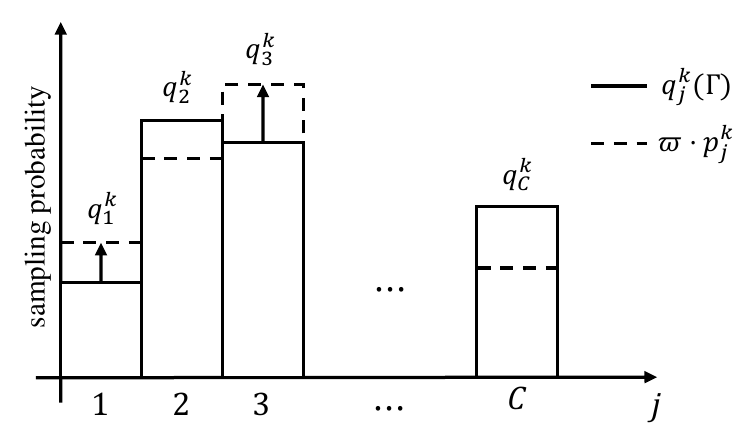}
    \end{minipage}%
    \caption{The water-filling sketch of the optimal IS weights.}
    \label{waterfilling}
\end{figure}

\begin{remark}
    \label{rmk-thm2}
    Let $q_j^{k}(\Gamma)=p_j+\alpha^k_{j}(\mathcal{T})\cdot\Gamma_k(\lambda)$. Note that the $k$-th client's optimal IS probability for category $j$ is the maximum between $\varpi p_j^k$ and $q_j^{k}(\Gamma)$. Besides, $\Gamma_k(\lambda)$ is a monotonic function of $\lambda$ and $\Gamma(\lambda)\geq 0$ always holds. Thus, we only need to consider the optimal non-negative $\Gamma^*$. Then, the numerical solutions in \eqref{thm2-q_star} can be calculated by choosing a proper $\Gamma^*$ through a classical water-filling approach as shown in Fig. \ref{waterfilling}.
\end{remark}
\begin{remark}[The comparison with IS based FL]
    \label{rmk3-thm2}
    A related approach for non-i.i.d. FL is ISFedAvg proposed in \cite{rizk2020federated}, which also adopts the importance sampling to improve the FL performance. However, ISFedAvg only considers the convex loss functions, and assumes that the distance between the local models and the global model is bounded. Besides, its sampling strategy is to up-sample the data with larger gradient norms. In contrast, Theorem \ref{thm2-opt} implies that the categories with larger global proportion or smaller gradient Lipschitz shall be up-sampled. The experimental results in Section \ref{subsec evaluation prop} will indicate that ISFL is more reasonable and compatible for deep learning models. Moreover, the optimal IS weights of ISFL can be calculated by a water-filling approach, which is more flexible and practical.
\end{remark}
By adopting a water-filling approach, we only need to find a proper $\Gamma^*$ to guarantee that the lowest sampling probability $\varpi p_j^k$ falls below $q_j^k(\Gamma)$, which is the water-filling level height of Category $j$. Furthermore, we formulate the following Theorem \ref{thm3-opt-Gamma} to show how the optimal factor $\Gamma^*$ is calculated through the water-filling method.
\begin{theorem}[Calculation of $\Gamma^*$]
    \label{thm3-opt-Gamma}
    For client $k$, if the lowest IS weight $\varpi$ is properly set to satisfy $p_j\geq \varpi p^k_j$ for all categories $j=1,\cdots,C$, through the water-filling approach, the optimal factor $\Gamma^*$ in \eqref{thm2-q_star} shall be chosen as
    \begin{equation}
        \label{thm3-Gamma_star}
        \Gamma^*=\min_j\left\{\frac{p_j-\varpi p_j^k}{-\alpha^k_j(\mathcal{T})}\bigg|\frac{p_j-\varpi p_j^k}{-\alpha^k_j(\mathcal{T})}\geq 0\right\},
    \end{equation}
    if the set is not empty.
\end{theorem}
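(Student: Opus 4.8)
\textbf{Proof proposal for Theorem \ref{thm3-opt-Gamma}.}

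The plan is to leverage the water-filling picture from Remark \ref{rmk-thm2} together with the two monotonicity facts already established: $\Gamma_k(\lambda)$ is monotone in $\lambda$ and $\Gamma_k(\lambda)\ge 0$, so the only free parameter to fix is a single nonnegative scalar $\Gamma^*$. First I would recall that $q_j^k(\Gamma)=p_j+\alpha_j^k(\mathcal{T})\cdot\Gamma$ is an affine function of $\Gamma$ for each category $j$, with slope equal to the zero-sum factor $\alpha_j^k(\mathcal{T})$; at $\Gamma=0$ we have $q_j^k(0)=p_j$, which by the hypothesis $p_j\ge \varpi p_j^k$ already satisfies the lower bound constraint \eqref{Pa} for every $j$. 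Hence at $\Gamma=0$ the point $\boldsymbol{q}^k(0)=\boldsymbol{p}$ is feasible for $\mathcal{P}^k_{\mathcal{T}}$ but is not the minimizer of $\rho_{\mathcal{T}}$; increasing $\Gamma$ moves $\boldsymbol{q}^k(\Gamma)$ along the descent direction identified in the proof of Theorem \ref{thm2-opt} until some constraint $q_j^k\ge\varpi p_j^k$ becomes active.

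Next I would determine exactly when each constraint becomes active. Since $\sum_j\alpha_j^k(\mathcal{T})=0$, the categories split into those with $\alpha_j^k(\mathcal{T})\ge 0$ (for which $q_j^k(\Gamma)$ is nondecreasing in $\Gamma$, so the lower-bound constraint can never be violated as $\Gamma$ grows from $0$) and those with $\alpha_j^k(\mathcal{T})<0$ (for which $q_j^k(\Gamma)$ strictly decreases). For the latter set, the constraint $p_j+\alpha_j^k(\mathcal{T})\Gamma\ge\varpi p_j^k$ is equivalent to $\Gamma\le \dfrac{p_j-\varpi p_j^k}{-\alpha_j^k(\mathcal{T})}$, and the right-hand side is nonnegative precisely because $p_j\ge\varpi p_j^k$ and $-\alpha_j^k(\mathcal{T})>0$. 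Taking the intersection over all such $j$, the largest admissible $\Gamma$ — which is the one that pushes $\boldsymbol{q}^k$ as far as possible in the descent direction while remaining feasible — is the minimum of these ratios, i.e. \eqref{thm3-Gamma_star}. The restriction to ratios that are $\ge 0$ in the statement is just a compact way of saying we only look at the categories with $\alpha_j^k(\mathcal{T})<0$ (when $\alpha_j^k=0$ the ratio is undefined/irrelevant and when $\alpha_j^k>0$ it is negative and discarded).

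Finally I would argue optimality: because $\rho_{\mathcal{T}}(\boldsymbol{q}^k(\Gamma))$ is (by the computation underlying Theorem \ref{thm2-opt}) decreasing along this ray in $\Gamma$ over the feasible range, the optimum is attained at the boundary of feasibility, which is exactly $\Gamma^*$ given by \eqref{thm3-Gamma_star}; for $\Gamma>\Gamma^*$ at least one $q_j^k$ drops below $\varpi p_j^k$ and the point leaves the feasible set, at which stage the $\max\{\cdot,\cdot\}$ in \eqref{thm2-q_star} clips that coordinate and the remaining coordinates are re-solved — but the theorem as stated only asserts the value of $\Gamma^*$ corresponding to the first binding constraint, so this suffices. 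The main obstacle I anticipate is the bookkeeping around the sign of $\alpha_j^k(\mathcal{T})$ and making the "if the set is not empty" proviso precise: one must verify that the set of categories with $\alpha_j^k(\mathcal{T})<0$ is generically nonempty (it is, unless all $L_{k,i}(\mathcal{T})$ are equal, in which case $\alpha_j^k\equiv 0$ and $\Gamma^*$ is immaterial since $\boldsymbol{q}^k=\boldsymbol{p}$ is already optimal), and that the water level never needs to be raised above the first active constraint for the regime the theorem addresses. Everything else is the routine affine/monotonicity argument sketched above.
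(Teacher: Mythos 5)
Your proof follows essentially the same route as the paper's: impose feasibility $p_j+\alpha^k_j(\mathcal{T})\,\Gamma\geq\varpi p_j^k$ for every category, observe that the constraints with $\alpha^k_j(\mathcal{T})>0$ are vacuous because their ratios are negative while $\Gamma\geq 0$, and take $\Gamma^*$ as the smallest of the remaining nonnegative ratios, exactly as in \eqref{thm3-Gamma_star}. The only caveat is your closing claim that $\rho_{\mathcal{T}}(\boldsymbol{q}^k(\Gamma))$ is decreasing over the whole feasible interval, which is guaranteed only near $\Gamma=0$ and is not actually needed: the paper's proof, like the essential part of yours, rests solely on this feasibility characterization.
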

The proof is given in Appendix \ref{appendix B-Thm3}.

Overall, according to Theorem \ref{thm2-opt} and Theorem \ref{thm3-opt-Gamma}, we theoretically solve $\mathcal P^k_{\mathcal{T}}$ and obtain the optimal local IS strategies for each client.

\section{Algorithm Designs}
\label{section alg}
In this section, based on above theoretical results, we formulate the formal ISFL algorithms. Some practical issues and the costs will also be discussed.
\subsection{ISFL Algorithms}
ISFL generalizes the original FL training process at the federated epoch. In addition to collecting, aggregating and sending back the updated model parameters to clients, the center also calculates and hands out the optimal IS weights for each client. According to Theorem \ref{thm3-opt-Gamma}, we first develop the water-filling algorithm to calculate the optimal IS strategies as Algorithm \ref{alg:water-filling}.
\begin{algorithm}
    \caption{Water-filling Solutions of Optimal Local IS Weights for Client $k$.}
    \label{alg:water-filling}
    {\bf Input:} lowest weight $\varpi$, global distribution $\{p_i\}$, local distribution $\{p^k_i\}$, local empirical gradient Lipschitz $\{L_{k,i}\}$, etc.
    \begin{algorithmic}[1]
        \State $\boldsymbol{\Gamma}_k\leftarrow \emptyset$;
        \For{each category $j=1,\cdots,C$}
        \State Calculate the gradient Lipschitz deviation $\alpha^k_j$ as \eqref{thm2-alpha};
        \If{$\alpha^k_j<0$}
        \State $\boldsymbol{\Gamma}_k\leftarrow\boldsymbol{\Gamma}_k\cup\frac{p_j-\varpi p_j^k}{-\alpha^k_j}$;
        \Else
        \State Continue;
        \EndIf
        \EndFor
        \State Select the optimal $\Gamma$: $\Gamma^*_k=\min\Big\{\boldsymbol{\Gamma}_k\cup\max\big\{\boldsymbol{\Gamma}_k\cup 0\big\}\Big\}$;
        \For{each category $j=1,\cdots,C$}
        \State Calculate the IS weight: $w^k_j\leftarrow\frac{p_j+\alpha^k_j\Gamma^*}{p^k_j}$;
        \EndFor
    \end{algorithmic}
    {\bf Output:} IS weights $\{w^k_i\}$.
\end{algorithm}

In practice, the local empirical gradient Lipschitz changes with the model aggregation since the local models are replaced as new global model every $E_l$ epochs. Therefore, $\{L_{k,i}\}$ and the IS weights shall be updated synchronously with the model aggregation at every federated epoch. When clients finish a set of local training, the gradient Lipschitz will be updated. Meanwhile, the IS weights for next local training epochs will be renewed. The full procedures of FL with local IS, ISFL, will be referred to as Algorithm \ref{alg:isfl}.
\begin{algorithm}[h]
    \caption{ISFL: Federated Learning with Local Importance Sampling.}
    \label{alg:isfl}
    {\bf System Settings:} global round $T_G$, federated period $E_l$, client weights $\left\{\pi_k\right\}$, local dataset $\mathcal{D}_k$, learning rate $\left\{\eta\right\}$, etc.\\
    {\bf Local Initialization:} model parameters $\big\{\boldsymbol{\theta}^k_0\big\}$, IS weights $\{w^k_i\leftarrow 1\}$.
    \begin{algorithmic}[1]
        \State epoch $t\leftarrow 1$;
        \While{$t\leq T_G\cdot E_l$}
        \For{each client $k$}\Comment{\textcolor{cyan}{local training with IS}}
        \State Sample batches $\{\boldsymbol{\xi}^k\}$ according to $\left\{w^k_i\right\}$;
        \State Locally train $\boldsymbol{\theta}^k_t$ with $\{\boldsymbol{\xi}^k\}$;
        \EndFor
        \If{$t \mod E_l==0$}\Comment{\textcolor{orange}{model aggregation}}
        \State Upload local parameters $\big\{\boldsymbol{\theta}^k_t\big\}$;
        \State Update global parameters: $\bar{\boldsymbol{\theta}}_t\leftarrow\sum_{k=1}^K\pi_k\boldsymbol{\theta}^k_t$;
        \For{each client $k$}\Comment{\textcolor{teal}{IS weight updating}}
        \State Update gradient Lipschitz $\left\{L_{k,i}\right\}$ as \eqref{def-eq-L};
        \State Carry Algorithm \ref{alg:water-filling} to update $\left\{w^k_i\right\}$;
        \State Download global parameters: $\boldsymbol{\theta}^k_t\leftarrow\bar{\boldsymbol{\theta}}_t$;
        \EndFor
        \EndIf
        \State $t\leftarrow t+1$;
        \EndWhile
    \end{algorithmic}
    {\bf Output:} Model parameters: $\bar{\boldsymbol{\theta}}_t$ and $\big\{\boldsymbol{\theta}^k_t\big\}$.
\end{algorithm}

Specifically, Line 3 to 6 in Algorithm \ref{alg:isfl} is the local training with importance sampling on each client. Line 8, 9 and 13 comprise the traditional federated parameter updating. The IS weight updating is carried out in Line 11 and 12.
ISFL can be regarded as a modified version of the original FL algorithms with additional operations at the local training and federated updating:
\begin{itemize}
    \item[1)] During local epochs, ISFL introduces the dynamic importance sampling of training data for local training, which re-weights the samples of different categories to combat the impact of non-i.i.d. data.
    \item[2)] At federated rounds, besides the traditional parameter upload/download, ISFL adds extra operations: updating the gradient Lipschitz, calculating the renewed IS weights, and sending them back to clients simultaneously through the parameter downlinks.
\end{itemize}

Through the framework and the algorithms developed for ISFL, we consider the re-sampling as a key operation at the beginning of each local training epoch, which does not change the procedure of the original FL. The IS weights are updated at the end of each federated epoch, which also depends on the inner implementation of the clients or the parameter server. Therefore, the ISFL framework can be easily integrated into the existing FL systems as a key operation block.

\subsection{Practical Issues}
\label{subsec practical}
As discussed above, the local empirical gradient Lipschitz change with the model aggregation and the optimal IS weights shall be updated every federated epoch. In this work, we update the empirical gradient Lipschitz using a small and non-sensitive dataset $\mathcal{D}_L$ according to \eqref{def-eq-L} to protect the privacy of clients in FL.

The deployment of ISFL is flexible. The IS weight updating can be proceeded either centrally at parameter server or locally on each client. Note that the procedures consist of gradient Lipschitz updating and the IS weight calculation, where the former requires excessive computation. Therefore, we prefer to deploy the empirical gradient updating at the parameter server for better system efficiency. With the local data distribution uploaded, the optimal IS weights can be calculated at the server and sent back to each client. Particularly, if the local data distributions are considered to be the clients' privacy, the IS weights shall be updated locally with the necessary factors transmitted instead.

As for the cost of introducing ISFL into original FL, the extra communication costs can be neglected because at each federated rounds, clients only need to download the IS weights or the factors for local calculation, which take up far less capacity compared to the transmission of the model parameters. However, considerable computation costs are inevitable. The main computation consumption occurs while updating the empirical gradient Lipschitz, where the per-sample backward propagation shall be executed for each client using the selected dataset $\mathcal{D}_L$. Thus, the additional cost is $2\times K\times|\mathcal{D}_L|$ times gradient computation at every round of model aggregation. Note that such excessive computation is only decided by the size of $\mathcal{D}_L$ and will not increase as the local data grows.

As a result, the gradient Lipschitz updating at the federated round causes additional computation at the parameter server or distributed clients. Since this work focuses on investigating the ISFL framework, the corresponding computation scheduling and the alternative estimation of the gradient Lipschitz will not be considered in this paper. Both of them are indeed the important topics and also interesting, we may discuss it in the future.

\section{Experimental Evaluation}
\label{section sim}
In this section, to evaluate our proposals, we carry out several experiments on image classification tasks. The corresponding FL algorithms are implemented with {\textit{PyTorch}}. The details of the experiment configurations, the results and the insights will be presented below.

\subsection{Experiment Setups}
In the basic experiments, we consider an FL system with multiple clients which fully participate the aggregation. Some basic settings and learning configurations are listed in Table \ref{exptab}. During each global epoch, clients train their models locally for 5 epochs. The local data of each client are divided into batches with size of 128 to carry the mini-batch SGD training. Besides, we update the gradient Lipschitz using a small dataset $\mathcal{D}_L$ by \eqref{def-eq-L} with 500 samples randomly selected from the unused dataset.
\renewcommand\arraystretch{1.3}
\begin{table}[htbp]
        \newcommand{\tabincell}[2]{\begin{tabular}{@{}#1@{}}#2\end{tabular}}
        \centering
        \caption{\upshape Basic experiment settings.}
        \begin{tabular}{c|c}
                \hline
                \rowcolor{gray!20} Parameter                             & Value        \\
                \hline
                Number of clients $K$                                    & 10 \& 20     \\
                Maximum global epoch $T_G$                               & 25           \\
                Federated period $E_l$                                   & 5            \\
                Training batch size $\mathcal{B}$                        & 128          \\
                Learning rate $\eta$                                     & 1e-3         \\
                $N\!R$ (Mixed Non-iid)                                   & 0.95 \& 0.98 \\
                $\alpha$ (Dirichlet Non-iid)                             & 0.2 \& 0.4   \\
                Volume for gradient Lipschitz updating $|\mathcal{D}_L|$ & 500          \\
                \hline
        \end{tabular}
        \label{exptab}
\end{table}
\subsubsection{Dataset and Model}
Our experiments are carried based on two popular image classification datasets, CIFAR-10 and CIFAR-100. Since the goal is to verify the effectiveness of the proposed ISFL schemes, we are not aiming at the state-of-the-art performance. As for the neural network models, we adopt a light-weight ResNet \cite{he2016deep} with 2 residual convolutional blocks and 8 trainable layers in total (ResNet8) for CIFAR-10 tasks. The centralized model trained on full dataset reaches 91.25\% accuracy. While comparing with other non-IS based solutions for non-iid FL under CIFAR-100, we adopt the ResNet9 models for better convergence without complex hyper-parameter tuning or other tricks. The centralized model trained on full dataset gets 62.25\% and 85.25\% accuracy in terms of Top-1 and Top-5, respectively.

\subsubsection{Non-i.i.d. Settings}
We implement two commonly used label-level non-i.i.d. settings, namely, mixed non-i.i.d. as well as Dirichlet non-i.i.d., where the former implies the identical local data size, and the latter covers the diverse local data volume. In detail, mixed non-i.i.d. setting carries out sort-and-partition data split: (a) All the data samples are sorted by their labels and then divided equally into shards. Each shard has 500 images with $(1-N\!R)$ proportion uniformly sampled from all categories; and (b) Each client randomly takes 2 shards. Overall, each client possesses equivalent number of training samples. Larger $N\!R$ leads to more severe data heterogeneity. For Dirichlet non-i.i.d. setting, we generate the local data distribution by the Dirichlet distribution with parameter $\alpha$. Hence, the local data volume is different for each client. The larger $\alpha$ implies more balanced data distribution. The whole data of all clients is referred to as global data, $\mathcal{D}_G$. Since the global data i.e., the ensemble of the clients' data might still be imbalanced, we test the model performance not only on the standard CIFAR-10 test dataset $\mathcal{D}_S$, but also on the global data $\mathcal{D}_G$. The test on $\mathcal{D}_G$ reflects the convergence and training efficiency of FL algorithms. The test on $\mathcal{D}_S$ implies the generalization ability of the model because the data distribution differs from the integrated data for the non-i.i.d. settings. We set $N\!R$=0.95, 0.98 for mixed non-i.i.d. and $\alpha$=0.2, 0.4 for Dirichlet non-i.i.d. settings respectively, to represent different degrees of non-i.i.d. data.

\begin{table*}[htb]
        \caption{\upshape The numerical comparison of different learning schemes on CIFAR-10. The accuracy is reported in terms of the test accuracy on $\mathcal{D}_S$ / $\mathcal{D}_G$.}
        \label{compare_tab}
        \centering
        \setlength{\tabcolsep}{6.1mm}{
                \begin{tabular}{c|cc|cc}
                        \toprule

                        \multirow{2}*{Schemes} & \multicolumn{2}{c|}{{Mixed Non-i.i.d.}} & \multicolumn{2}{c}{{Dirichlet Non-i.i.d.}}                                                                     \\

                        \cmidrule{2-5}         & $N\!R$=0.95                             & $N\!R$=0.98                                & $\alpha$=0.4                    & $\alpha$=0.2
                        \\
                        \midrule
                        FedAvg                 & 63.38 / 66.25                           & 38.39 / 44.47                              & 68.91 / 72.76                   & 59.83 / 62.33                   \\
                        \midrule[0.3pt]
                        Uniform-IS             & 66.10 / 69.64                           & 56.05 / 64.97                              & 67.78 / 71.55                   & 54.52 / 59.94                   \\
                        $p_j$-IS               & 62.97 / 68.23                           & 39.41 / 42.55                              & 69.01 / 72.26                   & 56.74 / 59.28                   \\
                        ISFedAvg               & 48.14 / 49.49                           & 44.67 / 47.17                              & 70.04 / 75.20                   & 69.92 / 73.68                   \\
                        ISFL-$\varpi$=0.05     & \textbf{73.46} / \textbf{79.33}         & 68.58 / \textbf{78.25}                     & \textbf{72.54} / \textbf{78.66} & 68.80 / 74.44                   \\
                        ISFL-$\varpi$=0.01     & 72.18 / 79.23                           & \textbf{69.39} / 77.33                     & 72.08 / 77.92                   & \textbf{68.95} / \textbf{74.50} \\
                        \bottomrule
                \end{tabular}}
\end{table*}

\begin{figure*}[htb]
        \centering
        \subfigure[Local data distribution (D non-iid $\alpha$=0.4).]{
                \label{fig-dist-95}
                \begin{minipage}{0.31\textwidth}
                        \includegraphics[width=1\textwidth]{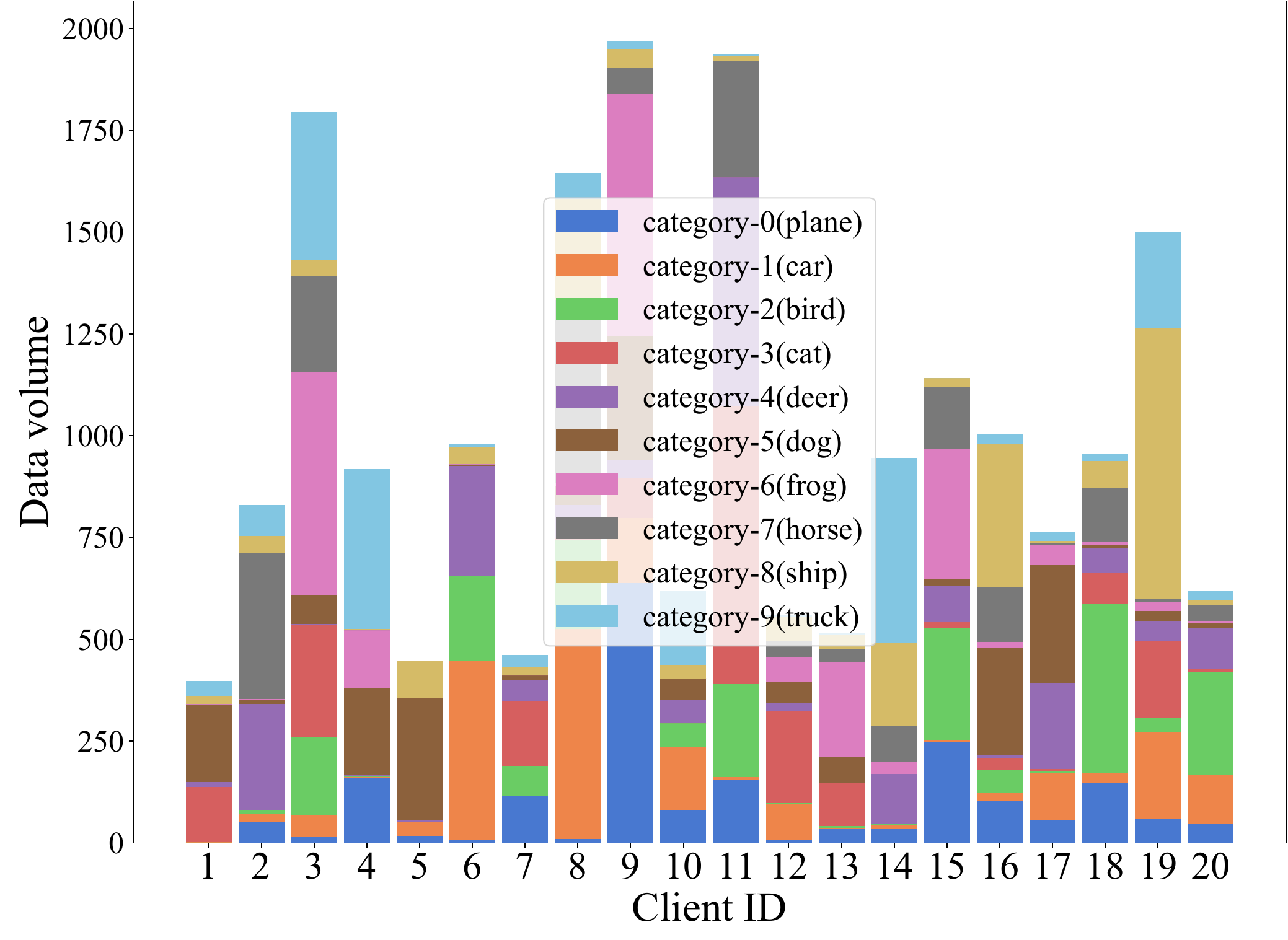}
                \end{minipage}}%
        \subfigure[Test accuracy ($\alpha$=0.4).]{
                \label{fig-acc-95}
                \begin{minipage}{0.31\textwidth}
                        \includegraphics[width=1\textwidth]{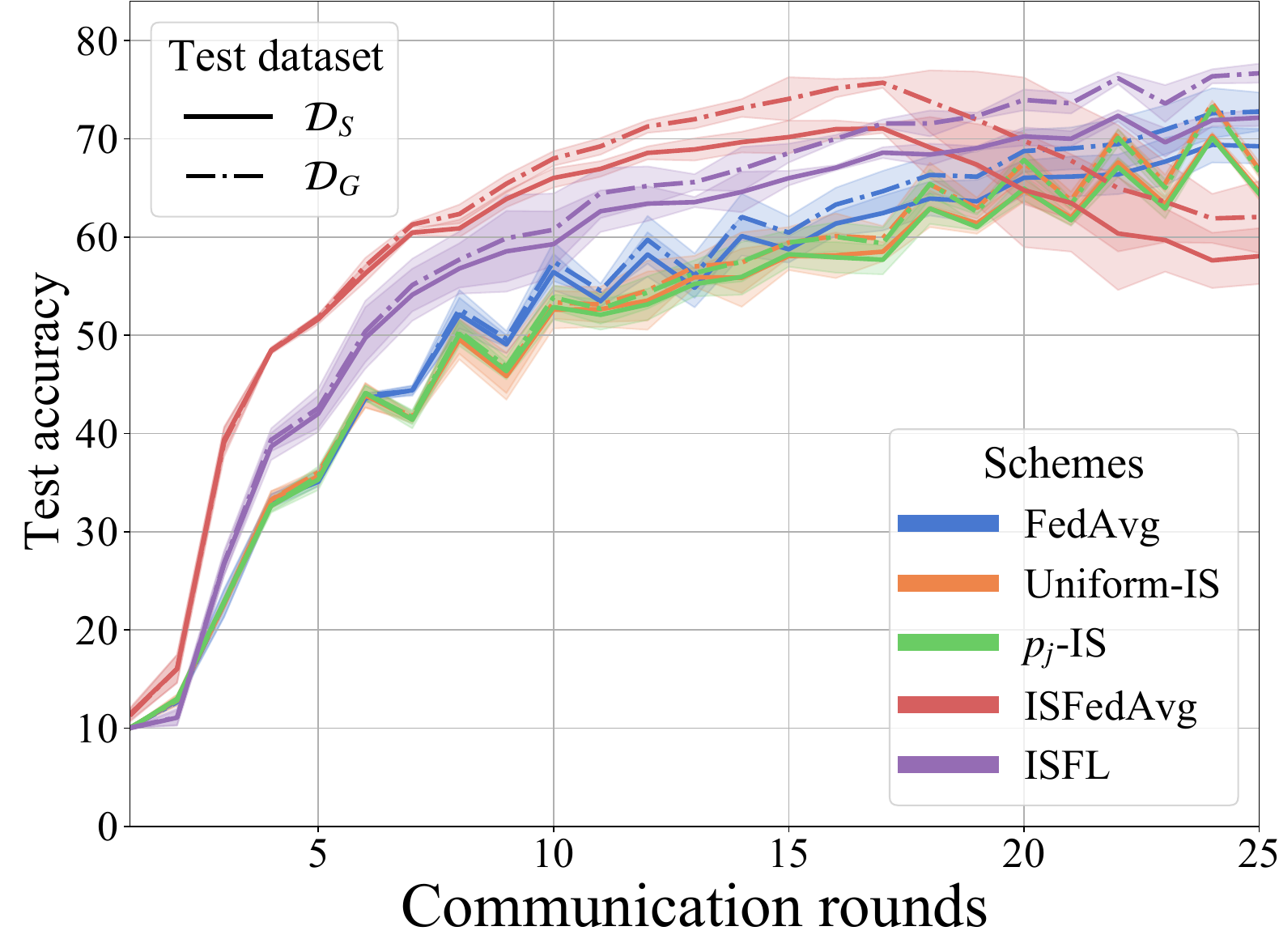}
                \end{minipage}}%
        \subfigure[Convergence of train loss ($\alpha$=0.4).]{
                \label{fig-loss-95}
                \begin{minipage}{0.31\textwidth}
                        \includegraphics[width=1\textwidth]{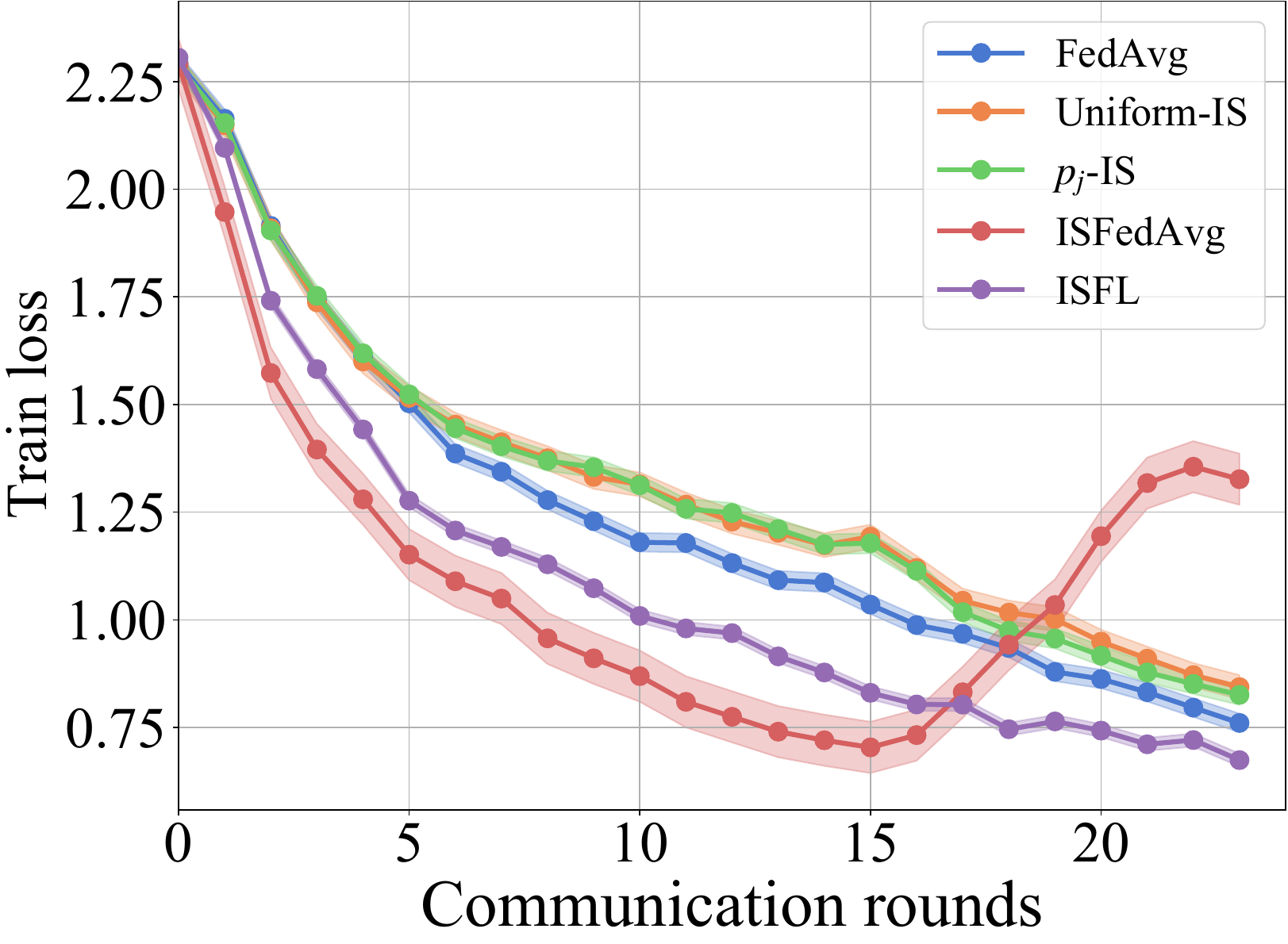}
                \end{minipage}}%

        \centering
        \subfigure[Local data distribution (M Non-iid$N\!R$=0.98).]{
                \label{fig-dist-98}
                \begin{minipage}{0.31\textwidth}
                        \includegraphics[width=1\textwidth]{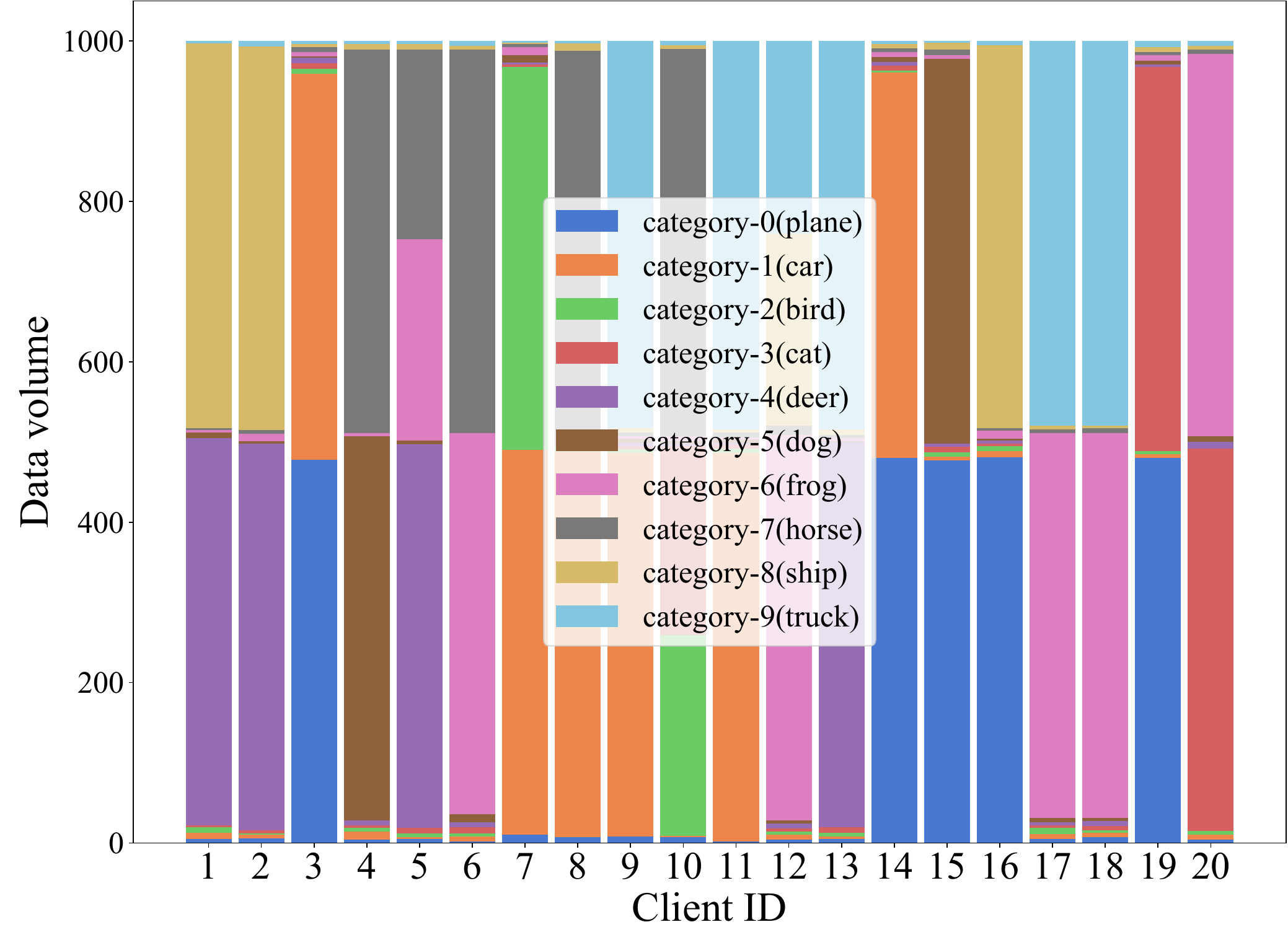}
                \end{minipage}}%
        \subfigure[Test accuracy ($N\!R$=0.98).]{
                \label{fig-acc-98}
                \begin{minipage}{0.31\textwidth}
                        \includegraphics[width=1\textwidth]{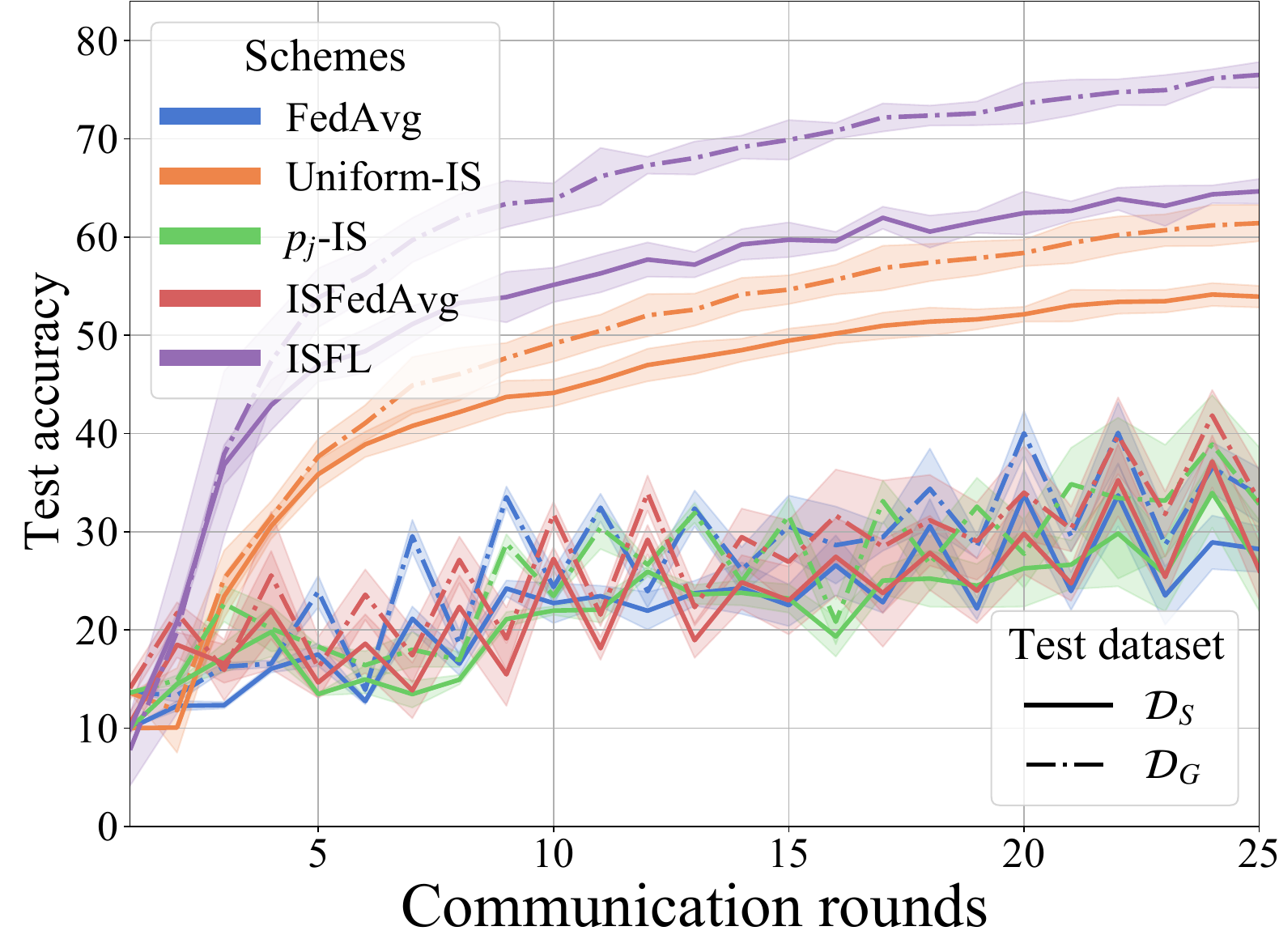}
                \end{minipage}}%
        \subfigure[Convergence of train loss ($N\!R$=0.98).]{
                \label{fig-loss-98}
                \begin{minipage}{0.31\textwidth}
                        \includegraphics[width=1\textwidth]{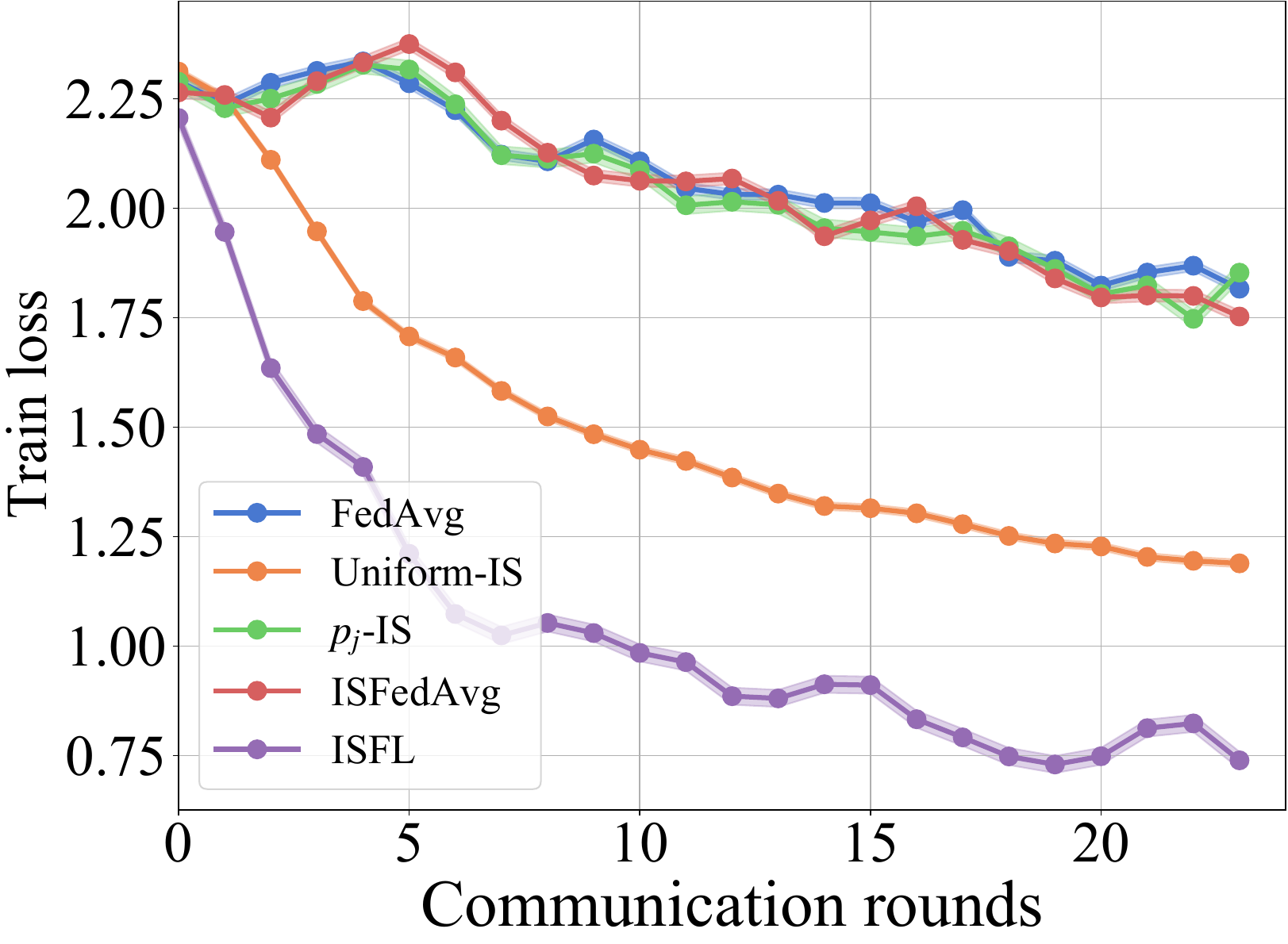}
                \end{minipage}}%
        \caption{Comparisons of several FL schemes on CIFAR-10.}
        \label{fig-compare} 
\end{figure*}

\subsubsection{Baselines and Comparisons}
In the following experiments, we choose FedAvg as the baseline and implement the importance sampling methods based on it. For comparisons, we select two IS based methods, ISFedAvg and Uniform-IS. ISFedAvg samples local data according to its gradient norms, i.e., more weight is given to a data sample that has a greater gradient norm as proposed in \cite{rizk2021optimal}. Thus, ISFedAvg carries out a per-sample IS weight calculation, which requires more $|\mathcal{D}_G|$ times gradient computation locally on clients for each weight updating. Uniform-IS is an intuitive sampling strategy which re-weights the data and forces to sample the local training data uniformly. As Uniform-IS fixes the sampling weights according to local distribution, no excessive computation is required. Additionally, to evaluate the performance of ISFL solutions, especially the necessity of involving the second biased term $\alpha^k_{j}(\mathcal{T})\cdot\Gamma$ in \eqref{thm2-q_star}, we also carry out an IS-based method which resample the data of each category with its global proportion ${p_j}$, i.e., the first term of the optimal sampling probability in \eqref{thm2-q_star}. We denote such method as $p_j$-IS.

\subsection{Accuracy and Convergence}

We test two different non-i.i.d. data split settings, i.e., the mixed non-i.i.d. with $N\!R$=0.95/0.98, and the Dirichlet non-.i.d. with $\alpha$=0.4/0.2. We select the Dirichlet non-i.i.d. with $\alpha$=0.4 and the mixed non-i.i.d. with $N\!R$=0.98 to represent the light and severe non-i.i.d. data settings, respectively. The local data distributions are shown in Fig. \ref{fig-dist-95} and \ref{fig-dist-98}.
To evaluate the performance of several training settings and methods, we compare ISFL with the original FL algorithm FedAvg and other sampling based schemes including Uniform-IS, $p_j$-IS, and ISFedAvg. The results are shown in Fig. \ref{fig-compare} and Table \ref{compare_tab}.
The accuracy comparisons of different learning schemes on $\mathcal{D}_S$ and $\mathcal{D}_G$ are presented in Fig. \ref{fig-acc-95} and \ref{fig-acc-98} and the numerical results are listed in Table \ref{compare_tab}. One can observe that for mixed non-i.i.d. data settings, both FedAvg and ISFedAvg perform poorly in terms of accuracy, convergence, and stability. For Dirichlet non-i.i.d. data, ISFedAvg can reach a good accuracy but soon falls into overfitting, as shown in Fig. \ref{fig-acc-95} and \ref{fig-loss-95}. It is worthy to be noticed that though gradient-norm based importance sampling (ISFedAvg) was proved to improve the non-i.i.d. FL for regression models \cite{rizk2021optimal}, such scheme seems not well compatible with the neural network models which do not satisfy the convexity assumption of loss functions. Specifically, assigning higher weights to the data sample with larger gradient norm cannot effectively reduce the divergence of local SGD while employing the NN models. Such phenomena will be explained in detail through the sampling weight discussions later. In contrast, Uniform-IS efficiently improves the accuracy under different non-i.i.d. data settings and performs well on stability. Notably, ISFL enhances FL up to 30\% accuracy under non-i.i.d. data splits. Furthermore, as listed in Table \ref{compare_tab}, $p_j$-IS performs poorly and even lower than original FedAvg, which again verifies that the second term $\alpha^k_{j}(\mathcal{T})\cdot\Gamma$ in \eqref{thm2-q_star} is necessary. Therefore, we deem that the ISFL solutions in Theorem \ref{thm1-bound} to \ref{thm3-opt-Gamma} are valid and effective to non-i.i.d. FL.


The curves and the numerical results suggest that even under severely non-i.i.d. data ($N\!R$=0.98 for mixed non-i.i.d., or $\alpha$=0.2 for Dirichlet non-i.i.d.), ISFL mitigates the performance decays on accuracy, stability and convergence, which strengthens FL's robustness against imbalance data distribution. The above results indicate that ISFL guarantees the generalization ability of the global model even though the local models are trained on non-i.i.d. data. This also implies ISFL's potentials for personalized preference. By substituting the $\{p_j\}$ in \eqref{thm2-q_star} as the distribution that local client prefers, the global model tends to perform well on both standard testing data and the personalized local data.

Besides, ISFL converges most rapidly and reaches the lowest train loss as shown in Fig. \ref{fig-loss-95} and \ref{fig-loss-98}. This also supports that the convergence bound in Theorem \ref{thm1-bound} makes sense and the corresponding optimal IS weights effectively improve the convergence of FL with non-i.i.d. data.

As for the setting of the lowest IS weight $\varpi$, we test the ISFL schemes under $\varpi$=0.05 and $\varpi$=0.01. The results in Table \ref{compare_tab} imply that different $\varpi$ lead to similar model performance. Such outcome is comprehensible since we set $\varpi$ as a small constant in the constraints \eqref{Pa} of $\mathcal P^k_{\mathcal{T}}$ to ensure that all categories could contribute to the federated model at least in a low proportion. Moreover, according to the water-filling solution in Theorem \ref{thm3-opt-Gamma}, the optimal IS probability is only related to the minimal of $\Gamma$. Thus, the small value of $\varpi$ influence little on ISFL performance.

\subsection{Comparisons with Non-IS Based Solutions}
\begin{figure*}[htbp]
        \centering
        \subfigure[CIFAR-10.]{
                \label{fig-cross-10}
                \begin{minipage}{0.31\textwidth}
                        \includegraphics[width=1\textwidth]{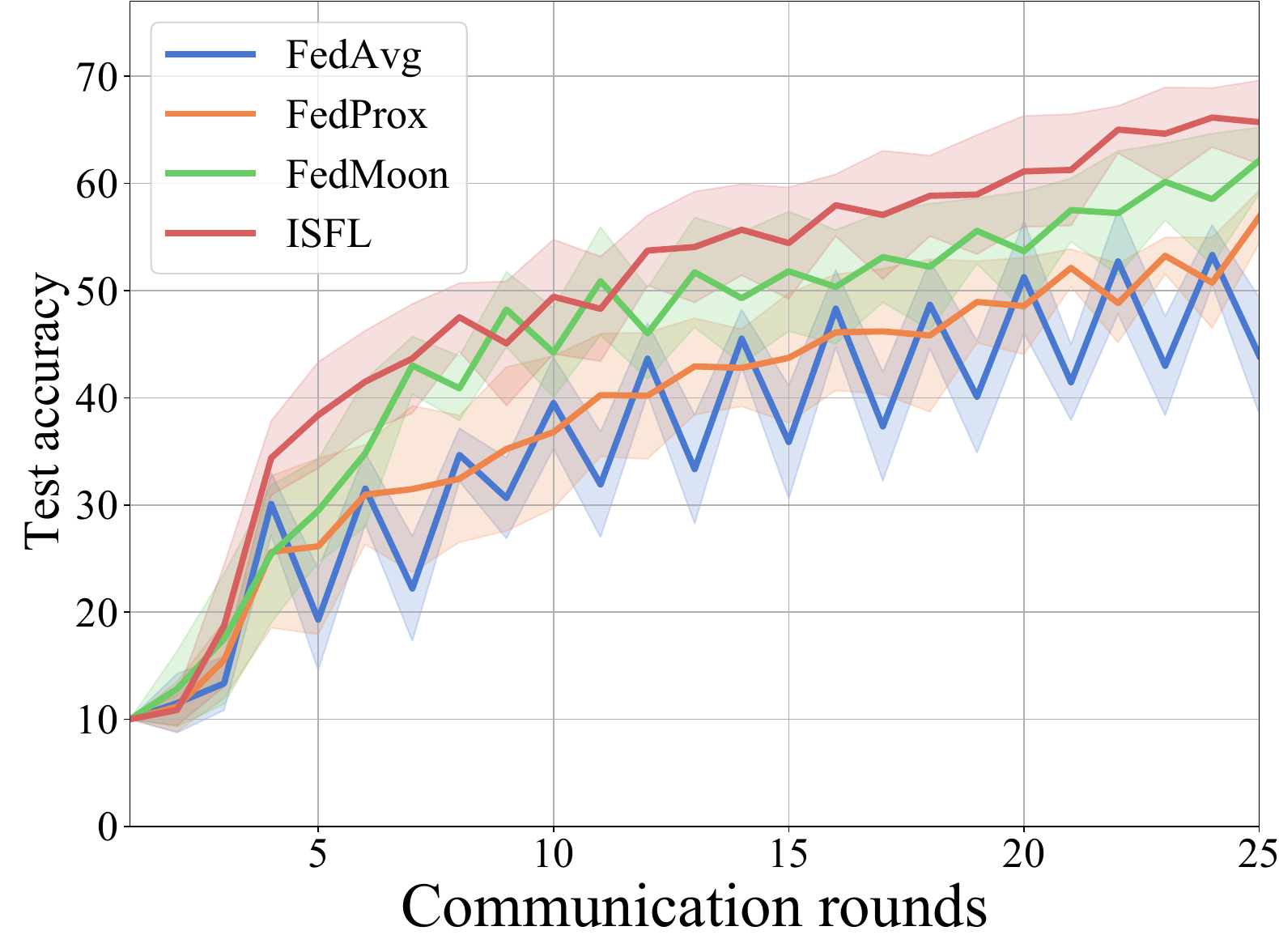}
                \end{minipage}}%
        \subfigure[CIFAR-100 (Top-1 Accuracy).]{
                \label{fig-cross-100-1}
                \begin{minipage}{0.31\textwidth}
                        \includegraphics[width=1\textwidth]{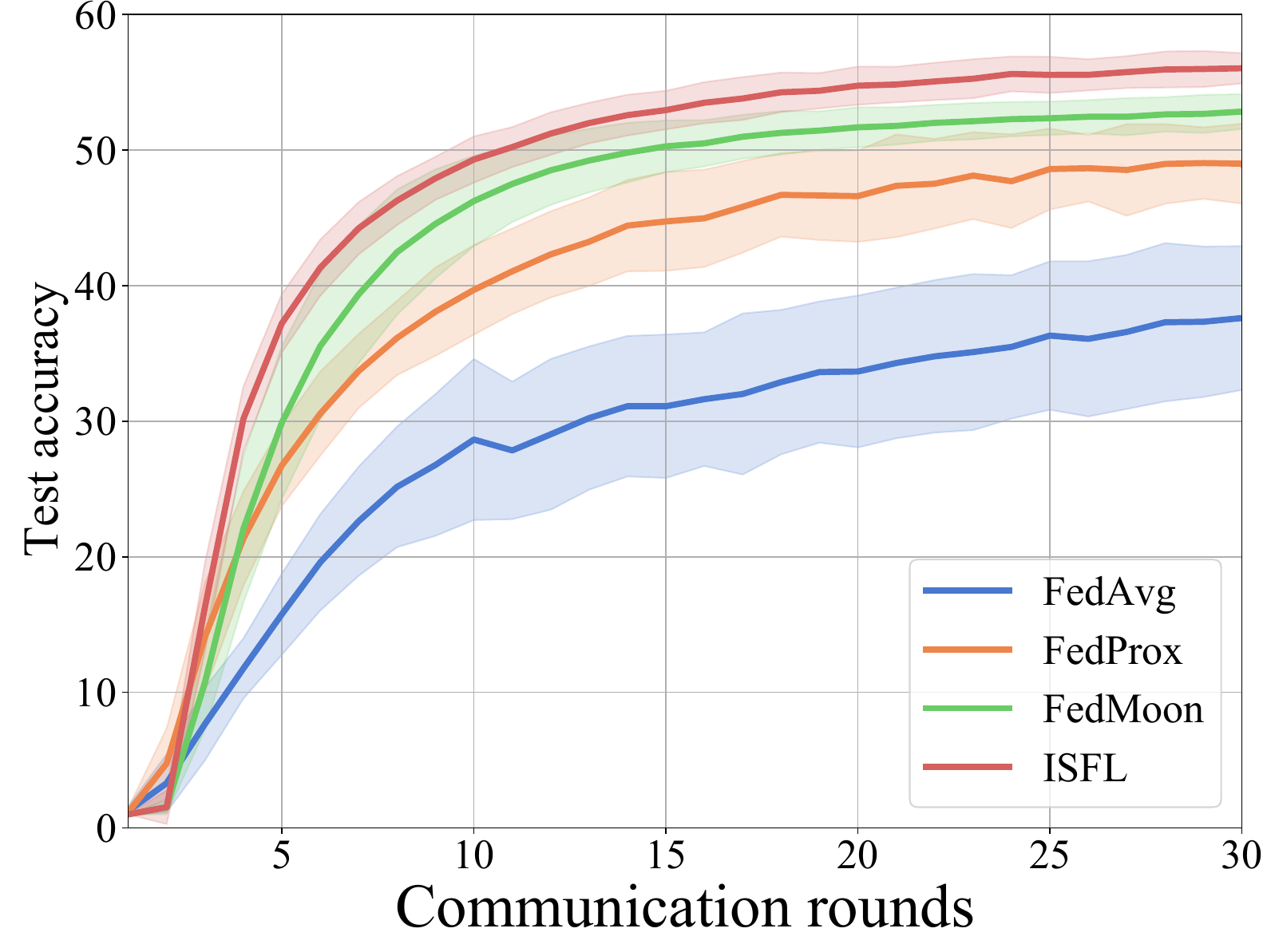}
                \end{minipage}}%
        \subfigure[CIFAR-100 (Top-5 Accuracy).]{
                \label{fig-cross-100-5}
                \begin{minipage}{0.31\textwidth}
                        \includegraphics[width=1\textwidth]{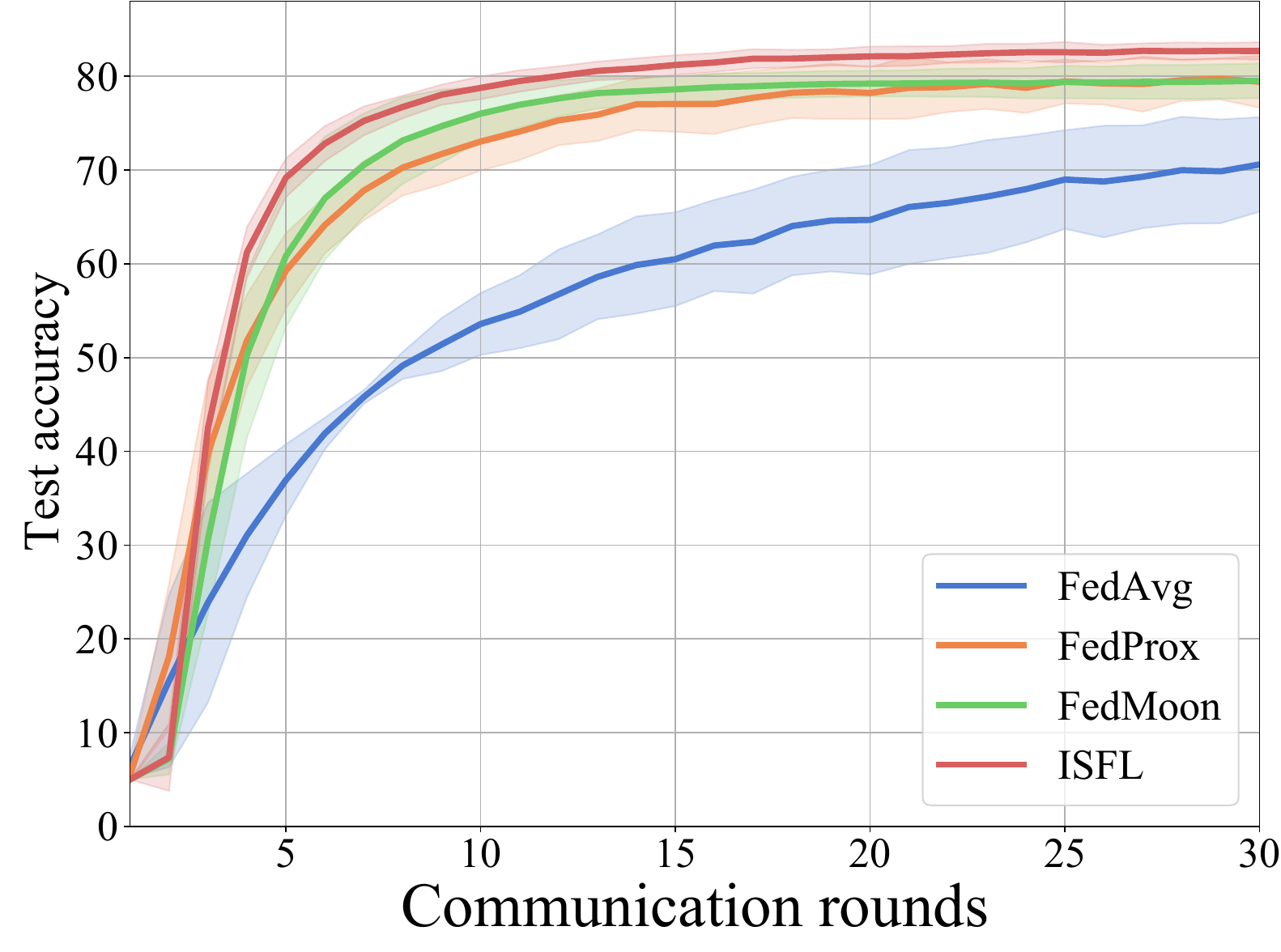}
                \end{minipage}}%
        \caption{Comparisons with non-IS based FL schemes for non-iid settings.}
        \label{fig-compare-cross} 
\end{figure*}
\begin{table}[htbp]
        \caption{\upshape The numerical comparison with non-IS based solutions to non-i.i.d. FL. $K$=10, Dirichlet Non-i.i.d. $\alpha$=0.2 for CIFAR-10 and 0.5 for CIFAR-100.}
        \label{cross_tab}
        \centering
        \begin{tabular}{c|cc|ccc}
                \toprule

                \multirow{2}*{Schemes} & \multicolumn{2}{c|}{{CIFAR-10}} & \multicolumn{3}{c}{{CIFAR-100}}                                                        \\

                \cmidrule{2-6}         & acc                             & Speedup                         & Top-1 acc      & Top-5 acc      & Speedup            \\
                \midrule
                FedAvg                 & 56.09                           & 1$\times$                       & 44.43          & 74.62          & 1$\times$          \\
                \midrule[0.3pt]
                FedProx                & 59.27                           & 1$\times$                       & 52.17          & 79.89          & 3$\times$          \\
                FedMoon                & 65.33                           & 1.79$\times$                    & 54.11          & 80.06          & 4.29$\times$       \\
                ISFL                   & \textbf{69.72}                  & \textbf{2.5$\times$}            & \textbf{57.01} & \textbf{84.64} & \textbf{5$\times$} \\
                \bottomrule
        \end{tabular}
\end{table}

To comprehensively evaluate the effectiveness of ISFL, we compare it with some popular non-IS based FL solutions, FedProx \cite{li2020federatedprox} and FedMoon \cite{li2021model}. These two methods tackle the non-i.i.d. data by adding the regularization terms during the local training. Specifically, FedProx introduces the squared distance between the local model weights and the current global model weights as the regularization term to the loss function, i.e.,
\begin{equation}
        \label{fedprox}
        \ell_{prox}\big(\bar{\boldsymbol{\theta}}^k_{t}\big)=\ell\big(\bar{\boldsymbol{\theta}}^k_{t}\big)+\frac{\mu}{2}\big\|\boldsymbol{\theta}^k_{t}-\bar{\boldsymbol{\theta}}_{t}\big\|^2,
\end{equation}
where $\mu$ is the regularization parameter. FedMoon adopts a contrastive learning mechanism \cite{chen2020simple}, which adds a contrastive loss to the local training loss, i.e.,
\begin{equation}
        \label{fedmoon}
        \ell_{moon}\big(\bar{\boldsymbol{\theta}}^k_{t}\big)=\ell\big(\bar{\boldsymbol{\theta}}^k_{t}\big)+\mu\ell_{con}\big(\boldsymbol{\theta}^k_{t},\boldsymbol{\theta}^k_{t-1},\bar{\boldsymbol{\theta}}_{t};\tau\big),
\end{equation}
where the contrastive loss is defined as:
\begin{equation}
        \label{conloss}
        \ell_{con}=-\log\frac{\exp\big(\text{sim}(z^k_{t},z_{t})/\tau\big)}{\exp\big(\text{sim}(z^k_{t},z_{t})/\tau\big)+\exp\big(\text{sim}(z^k_{t},z^k_{t-1})/\tau\big)}.
\end{equation}
Therein, $\text{sim}(\cdot)$ is the cosine similarity function, and $z$ represents the intermediate output before the output layer.
The comparisons are conducted on CIFAR-10 and CIFAR-100 datasets under Dirichlet non-i.i.d. data settings with $\alpha$=0.2 and 0.5, respectively. The number of the participation clients is set as 10.
After numerous experiments, we select the best hyper-parameters ($\mu$, $\mu$/$\tau$) for each method: FedProx ($\mu$=0.2), FedMoon ($\mu$=1, $\tau$=0.5) for CIFAR-10; and FedProx ($\mu$=0.01), FedMoon ($\mu$=1, $\tau$=0.2) for CIFAR-100.

The results are shown in Table \ref{cross_tab} and Fig. \ref{fig-compare-cross}. The experimental results demonstrate that ISFL outperforms the non-IS based solutions in terms of accuracy and convergence rate. ISFL achieves 69.72\% Top-1 accuracy on CIFAR-10 and 57.01\% Top-1 accuracy on CIFAR-100, which are 4\% higher than the best non-IS based solutions. Moreover, ISFL converges faster and reaches the lowest train loss. We compare the training acceleration of each method to reach the FedAvg performance. The speedup of ISFL is also significant, which is 2.5$\times$ on CIFAR-10 and 5$\times$ on CIFAR-100. The results indicate that ISFL is an effective and efficient FL solution for non-i.i.d. data.

\begin{figure}[htbp]
        \centering
        \begin{minipage}[b]{0.46\textwidth}
                \includegraphics[width=1\textwidth]{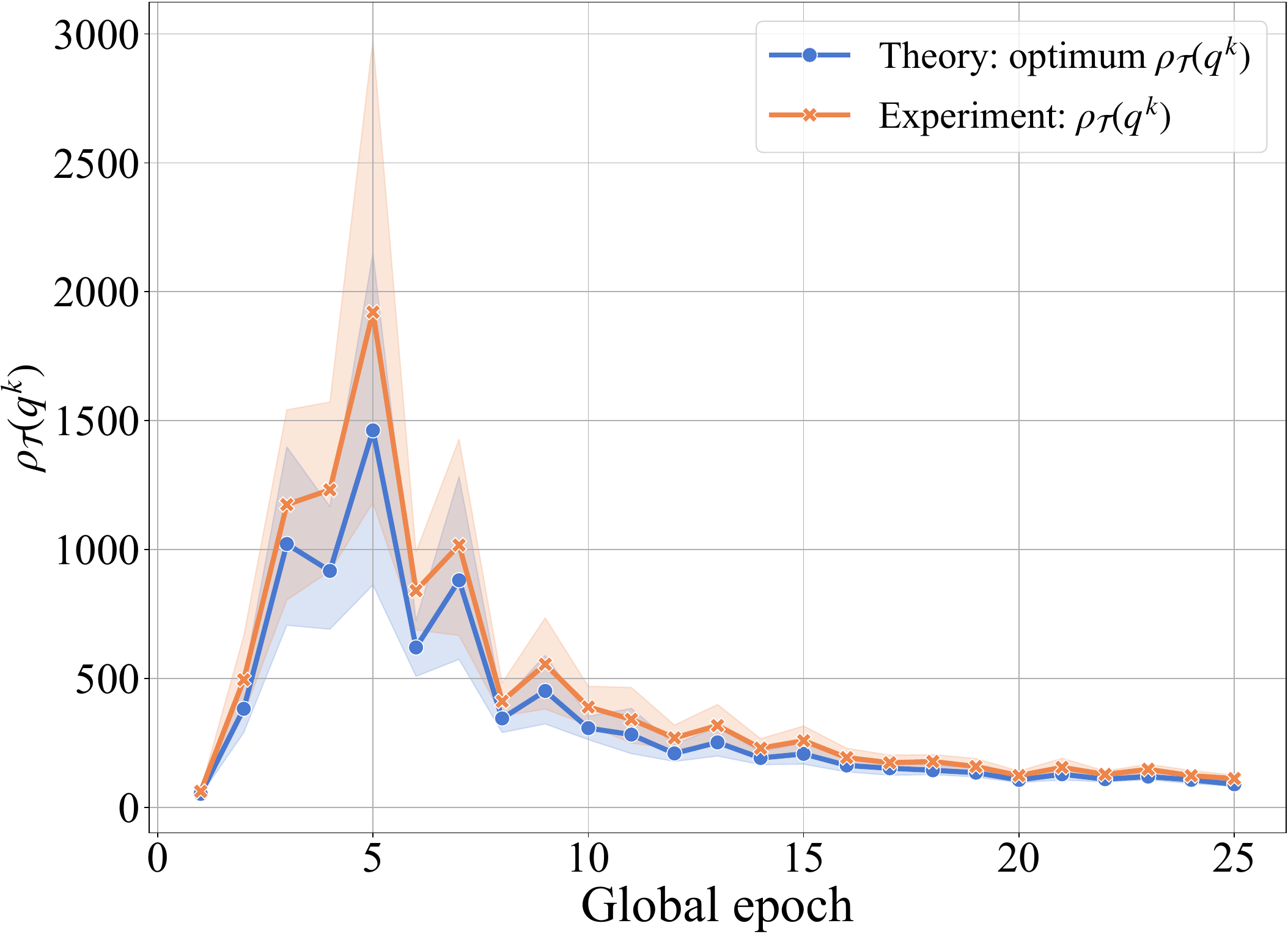}
        \end{minipage}%
        \caption{The evaluation of the convergence bound and the solution ($N\!R$=0.98).}
        \label{fig-thm-val}
\end{figure}

\subsection{Evaluation of The Propositions}
\label{subsec evaluation prop}

\begin{figure*}[htbp]
        \centering
        \subfigure[ISFedAvg.]{
                \label{fig-gnis-q}
                \begin{minipage}{0.47\textwidth}
                        \includegraphics[width=1\textwidth]{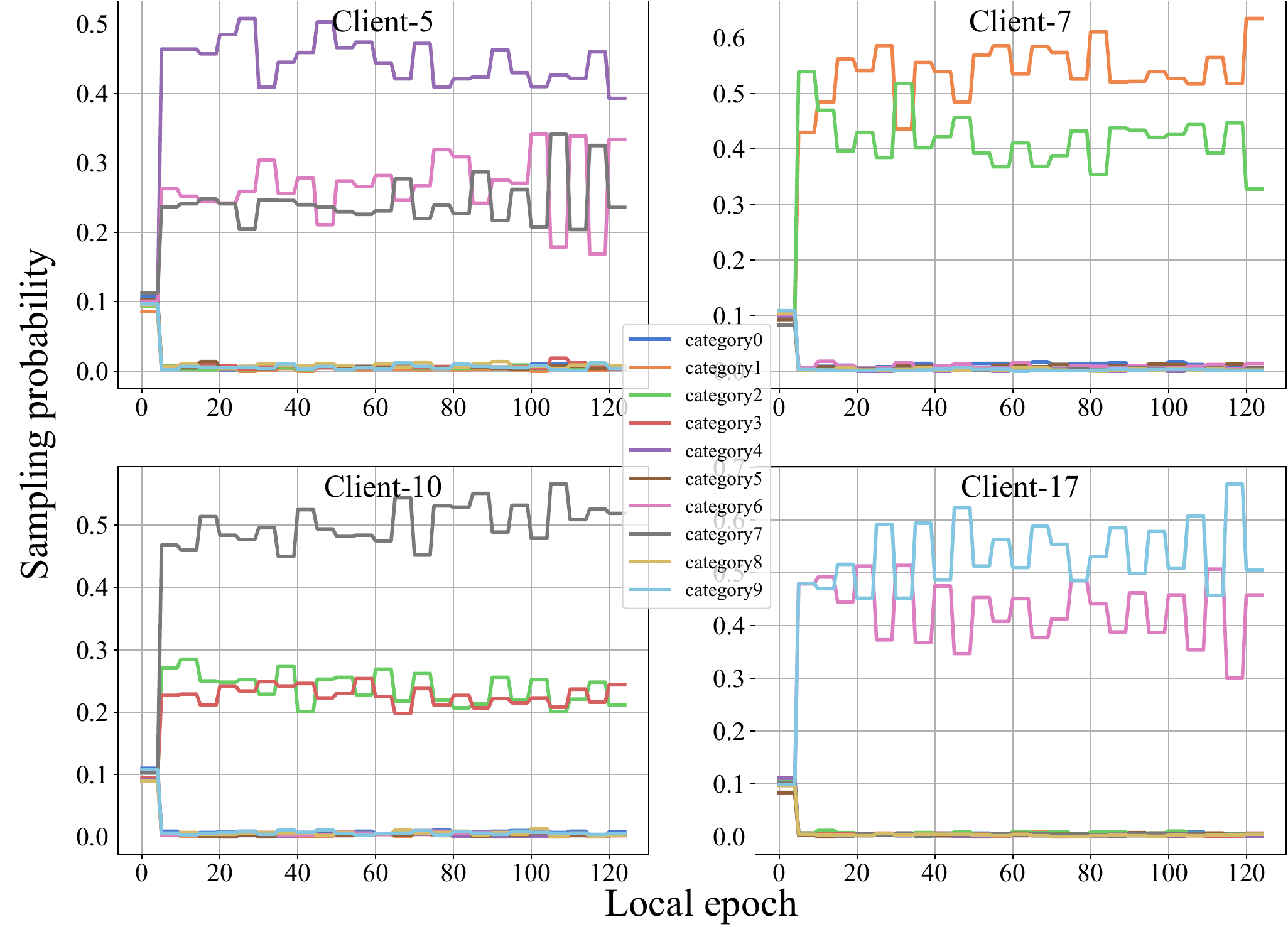}
                \end{minipage}}%
        \subfigure[ISFL.]{
                \label{fig-is-q}
                \begin{minipage}{0.47\textwidth}
                        \includegraphics[width=1\textwidth]{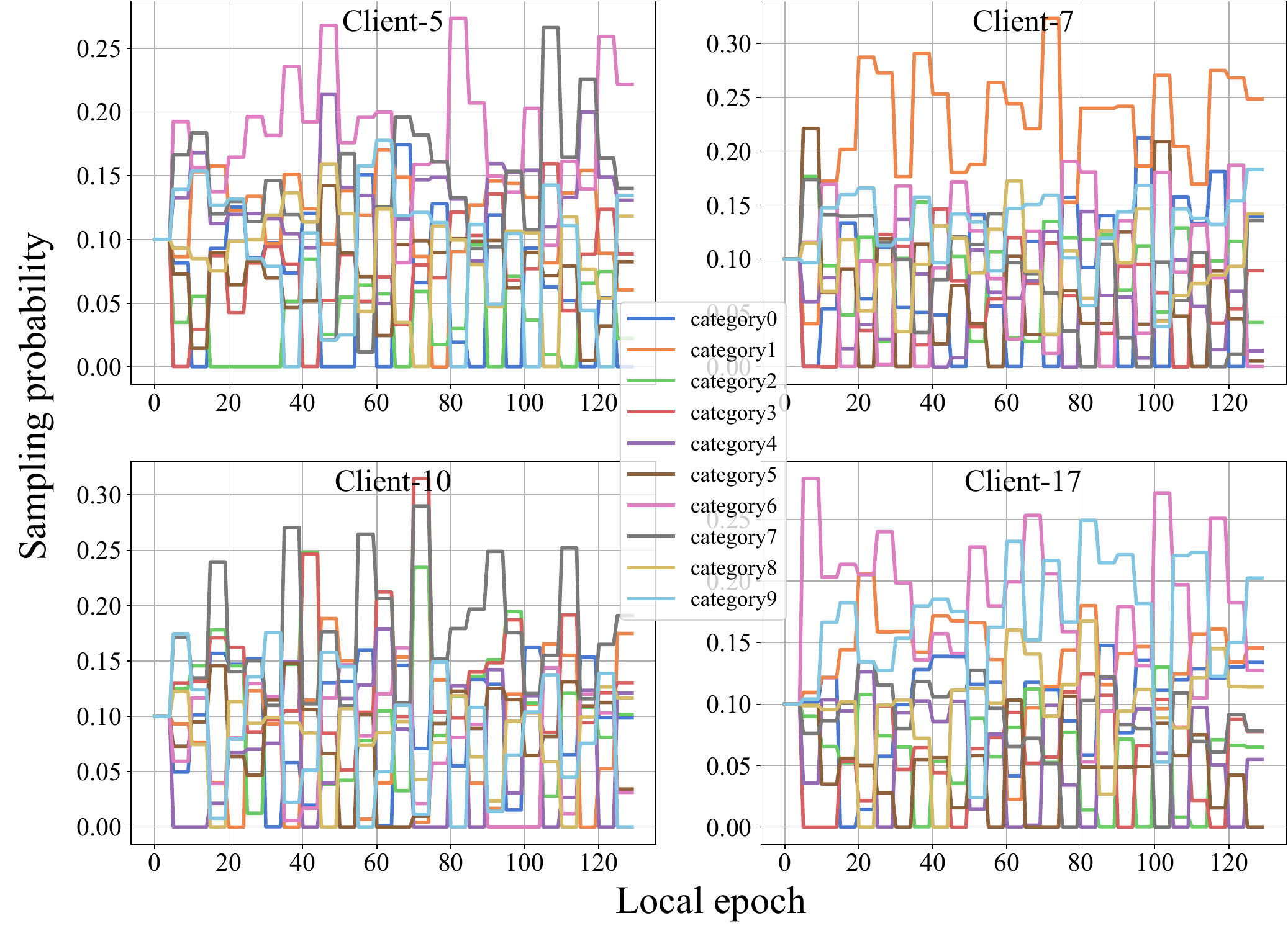}
                \end{minipage}}%
        \caption{Importance sampling probability $\{q^k_i\}$ of the selected clients ($N\!R$=0.98).}
        \label{fig-q}
\end{figure*}

Then we implement further experiments to evaluate the theoretical results and explain the local sampling strategies of ISFL.

To validate the core theorems of ISFL, we investigate the evolution of the multiplier $\rho_{\mathcal{T}}(\boldsymbol{q}^k)$ in Theorem \ref{thm1-bound}, which involves the effects of carrying out local importance sampling. The theoretical minimum of $\rho_{\mathcal{T}}(\boldsymbol{q}^k)$ and the real $\rho_{\mathcal{T}}(\boldsymbol{q}^k)$ solved by Theorem \ref{thm2-opt}-\ref{thm3-opt-Gamma} are plotted in Fig. \ref{fig-thm-val}, respectively. Both theoretical minimum $\rho_{\mathcal{T}}(\boldsymbol{q}^k)$ and experimental $\rho_{\mathcal{T}}(\boldsymbol{q}^k)$ decrease with the global epoch. This verifies the reasonability of Theorem \ref{thm1-bound}: Local importance sampling with optimal weights $\{w^k_j\}$ reduces the multiplier $\rho_{\mathcal{T}}(\boldsymbol{q}^k)$ in \eqref{thm1-ineq}. In this way, the upper bound of the gradient norm is minimized, and the convergence is strengthened.
Moreover, it is easy to confirm that the theoretical optimum and the experimental values of ISFL show same tendency. Two curves get closer and converge stably to a small value as the progress carries on. This indicates that the theoretical optimum $\rho_{\mathcal{T}}(\boldsymbol{q}^{k*})$ and the proposed solutions are consistent. Thus, we deem that the problem $\mathcal P^k_{\mathcal{T}}$ is well solved by Theorem \ref{thm2-opt} and Theorem \ref{thm3-opt-Gamma}.

Next, we conduct client-level analysis of importance sampling weights. With the random seeds fixed, Fig. \ref{fig-q} presents the evolution of the local sampling probability $\{q^k_i\}$ under ISFedAvg and ISFL on selected clients. Fig. \ref{fig-gnis-q} clearly reveals why gradient norm based IS schemes fail for NN models. Though the categories with large gradient norms are upper-sampled for training, the gradients of these categories still remain large norm. In other words, especially for FL with neural network models, utilizing a data sample more frequently in local training does not guarantee that its gradient on the updated model will reduce in future local training, which is also because periodical aggregation wipes away the efforts that the local models take to reduce the gradients of training samples. Therefore, we tend to explain that sampling methods based on GradNorm might not be well compatible with the FL systems using deep learning models such as NNs. On the contrary, as shown in Fig. \ref{fig-is-q}, ISFL seems to carry out a proper sampling strategy and the sampling probability curves seem more i.i.d. as well as adaptive patterns. According to Algorithm \ref{alg:isfl}, ISFL updates the local IS weights synchronously with the communication rounds. Each client adopts its latest IS weights for the local training epochs. One can observe that the IS probabilities of different categories change dynamically. As shown, ISFL does not simply carry the re-weighting (Uniform-IS) to sample the data of each category uniformly. Some categories are always sampled with a large probability, while some are occasionally re-weighted to nearly 0 probability. Moreover, the experiments illustrate that some categories with large proportion will be down-sampled but still get a large IS probability than uniform sampling, e.g., Category 1 for Client 7. Meanwhile, some categories with small proportion will be highly up-sample such as Category 3 for Client 10. The experimental results meet the conclusion in Theorems \ref{thm2-opt} and \ref{thm3-opt-Gamma}: According to \eqref{thm2-q_star}, the categories with larger global proportion or smaller empirical gradient Lipschitz might be utilized more frequently in local training.

\begin{figure}[htbp]
        \centering
        \begin{minipage}[b]{0.48\textwidth}
                \includegraphics[width=1\textwidth]{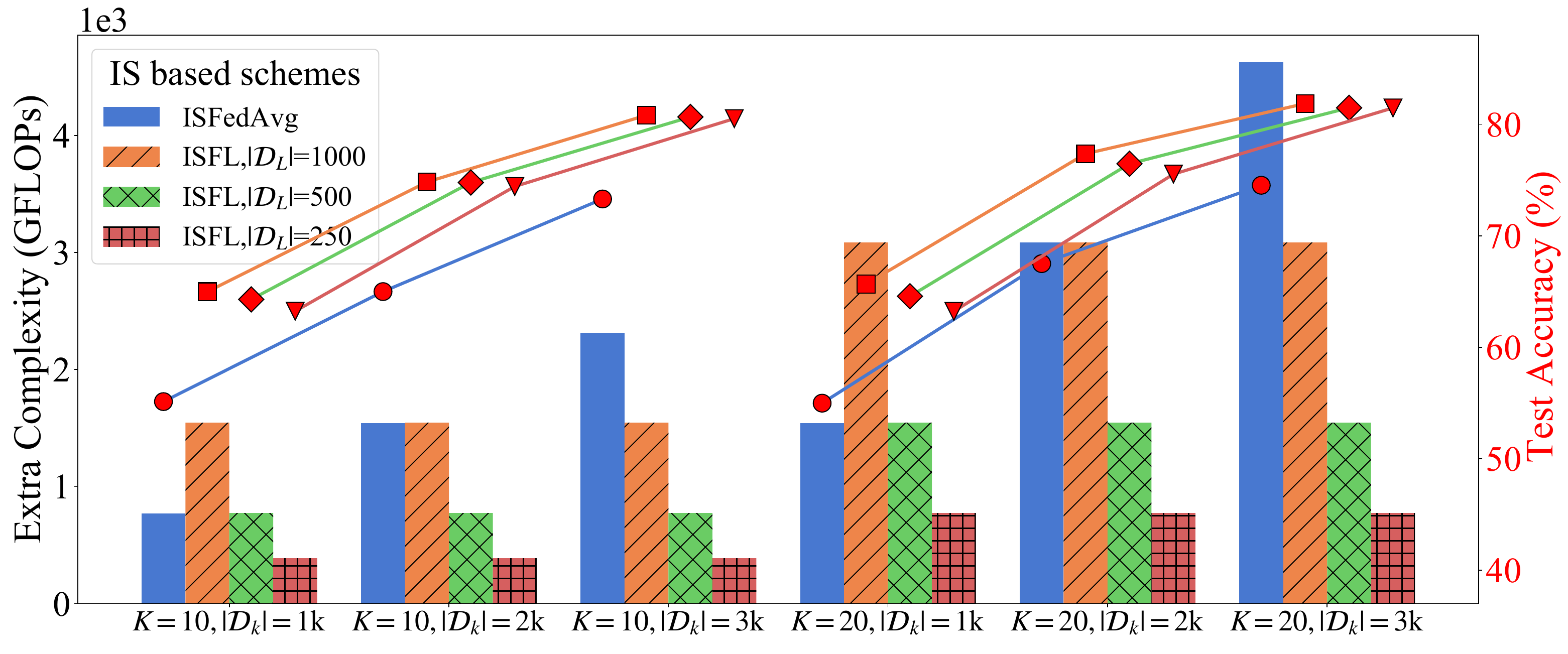}
        \end{minipage}%
        \caption{The extra-computation vs. accuracy analysis with different $K$, $|\mathcal{D}_k|$ and $|\mathcal{D}_L|$.}
        \label{fig-cmplx}
\end{figure}
Finally, as discussed in Section \ref{subsec practical}, we analyze the extra-computation of ISFL. The extra-computation is defined as the additional computation cost to calculate the sampling weights. The extra-computation vs. model-performance results are shown in Fig. \ref{fig-cmplx}, where the experiments are conducted under mixed non-i.i.d. data and different settings of client number $K$, local data volume $|\mathcal{D}_k|$, and Lipschitz data volume $|\mathcal{D}_L|$. Since ISFL adopt a small set $\mathcal{D}_L$ to compute the category-wise weights, the extra-computation does not increase with the local data volume. The comparison on $|\mathcal{D}_L|$ also reflects that a small data set is enough to update the gradient Lipschitz of each category. As shown by the accuracy curves, ISFL with different $|\mathcal{D}_L|$ maintains similar high accuracy even if the $|\mathcal{D}_L|$ is reduced from 1000 to 250.
This is a significant advantage compared to sample-wise IS methods like ISFedAvg, especially for the scenarios with limited computation resources or massive data.
This indicates that the ISFL scheme is efficient, robust as well as practical for FL systems with limited computation resources.


\subsection{Sampling Efficiency}
\begin{figure}[htbp]
        \centering
        \begin{minipage}[b]{0.48\textwidth}
                \includegraphics[width=1\textwidth]{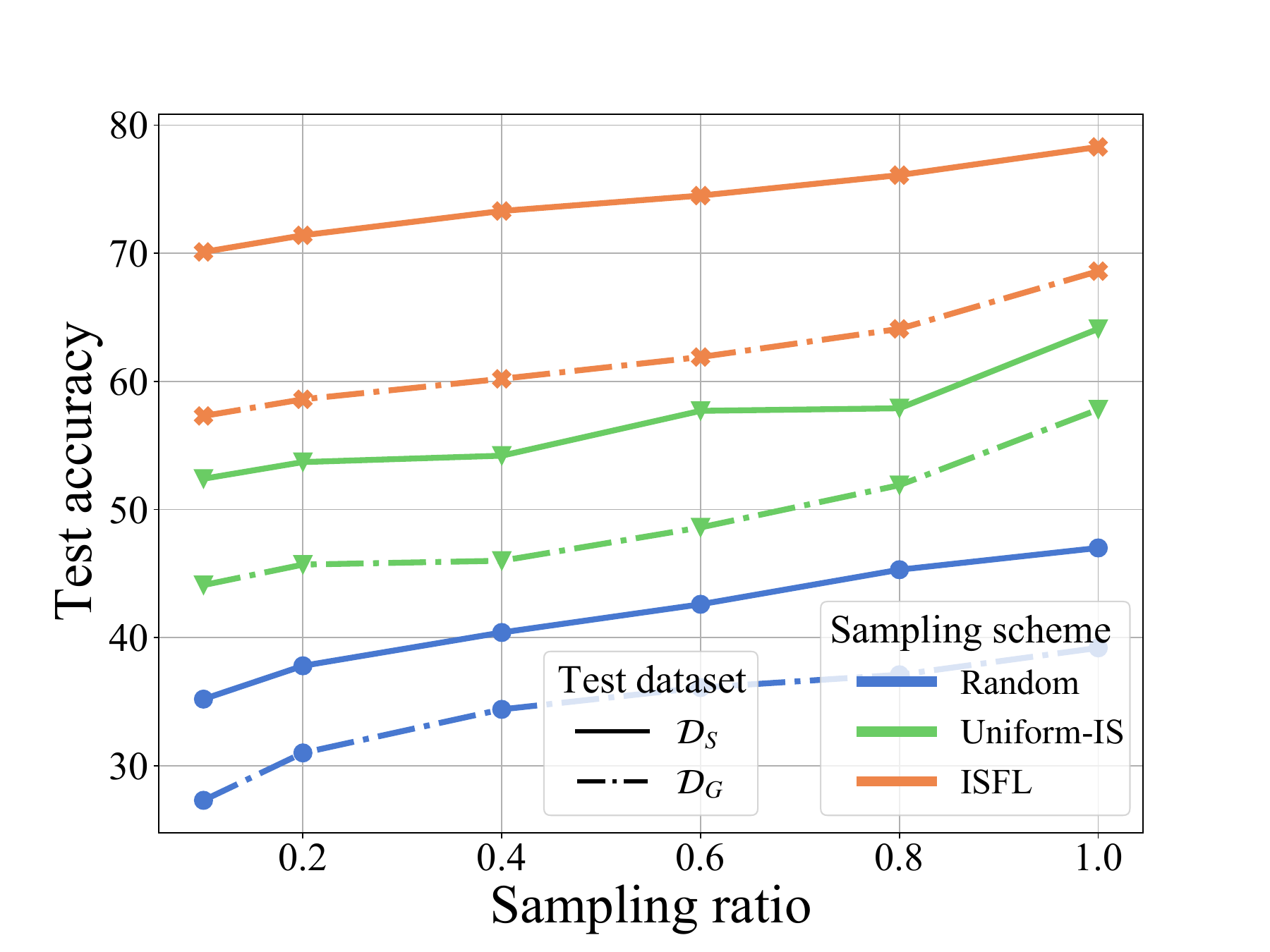}
        \end{minipage}%
        \caption{The sampling efficiency evaluation in different schemes. ($N\!R$=0.98)}
        \label{fig-sr}
\end{figure}

Since ISFL is a sampling-based approach, we also evaluate its data efficiency. Different from training the full data in every local epoch, we consider the partial sampling with a sampling ratio $S\!R$, i.e., each client selects only $\lfloor S\!R\cdot|\mathcal{D}_k|\rfloor$ samples for training at every local epoch. The comparison results with RW-IS and random sampling are shown in Fig. \ref{fig-sr}. ISFL not only gets a higher accuracy, but also its performance is stable and robust. The accuracy decay is little with the $S\!R$ decreases. Particularly, even though each client only samples 0.1 ratio training data in local epochs, ISFL still preserves 58\%, 71\% accuracy on $\mathcal{D}_S$ and $\mathcal{D}_G$, respectively. Thus, ISFL exhibits a better sampling efficiency and data robustness, which is important for the scenarios where the local computation resources/timeliness are restricted.

Overall, the evaluations demonstrate that ISFL improves the original FL on model accuracy, stability, convergence, sampling efficiency, as well as the compatibility with NN models. The experimental results also meet the theoretical deductions based on which the optimal IS weights are derived. The core idea of ISFL strategies can be interpreted as: 1) to up-sample the categories with larger global proportion or 2) to down-sample the categories with larger empirical gradient Lipschitz. For above reasons, we consider ISFL as an effective FL scheme with theoretical guarantees for non-i.i.d. data.
\section{Conclusion}
\label{section conclusion}
In this paper, we investigated the label-skewed non-i.i.d. data quagmire in federated learning and proposed a local importance sampling based framework, ISFL. We derived the theoretical guarantees of ISFL, as well as the solutions of the optimal IS weights using a water-filling approach. In particular, we improved the theoretical compatibility with the neural network models by relaxing the convexity assumptions. The ISFL algorithm was also developed and evaluated on the CIFAR-10 \& CIFAR-100 datasets. The experimental results demonstrated the improvement of ISFL for the non-i.i.d. data from several aspects. ISFL not only promotes the accuracy of basic FL, but also shows its potentials on convergence, explainability, sampling efficiency and data robustness. This is significant for FL, especially for the mobile light-weight scenarios.

For future work, more tests on the theorems and the performance of ISFL can be investigated. For instance, the finer-grained analysis of the parameter deviation and the gradient Lipschitz approximation can be studied. The other types of non-i.i.d. settings such as feature based non-i.i.d. cases shall also be considered. The potential strengths for local personalization can be further developed by setting the $\{p_i\}$ different for each client.
In addition, since ISFL is a local sampling federated scheme, it can be easily migrated into other emerging FL frameworks as a key operation block.
\bibliographystyle{IEEEtran}
\bibliography{IEEEabrv, refs}
\appendices

\section{Proof of The Convergence Theorem}
\label{appendix A}
To derive the convergence result in Theorem \ref{thm1-bound}, we firstly show some preliminaries and then complete the proofs.
\subsection{Proof Preparations}
Ahead of the proofs, we define some extra shorthand notations for the simplicity of writing as follows.

$t_c$: The latest aggregation epoch, i.e., $t_c=\lfloor\frac{t}{E_l}\rfloor\cdot E_l$.

$\bar{\boldsymbol{\theta}}_{t}=\sum\limits_{k=1}^K\pi_k\boldsymbol{\theta}^k_{t}$: The aggregated model at $t$-epoch. Note that the global model is equal to $\bar{\boldsymbol{\theta}}_{t}$ only at each global rounds, i.e., when $t=t_c$.

$\ell\big(\bar{\boldsymbol{\theta}}_{t}\big)=\sum\limits_{i=1}^Cp_i\ell\big(\xi_i;\bar{\boldsymbol{\theta}}_{t}\big)$: The average loss of the aggregated model $\bar{\boldsymbol{\theta}}_{t}$ on the full data.

$\tilde{\boldsymbol{g}}^{(k,i)}_t$: The stochastic gradient of client $k$ on category $i$ at $t$-th epoch. Let $\boldsymbol{g}^{(k,i)}_t=\mathbb{E}\ \tilde{\boldsymbol{g}}^{(k,i)}_t$ denote its expectation.

$\tilde{\boldsymbol{g}}^{k}_t=\sum\limits_{i=1}^C q^k_i\tilde{\boldsymbol{g}}^{(k,i)}_t$: The average stochastic gradient of client $k$ over all categories at $t$-th epoch. Let $\boldsymbol{g}^{k}_t=\mathbb{E}\ \tilde{\boldsymbol{g}}^{k}_t$ be its expectation.

$\tilde{\boldsymbol{g}}_t=\sum\limits_{k=1}^K \pi_k\tilde{\boldsymbol{g}}^{k}_t$: The aggregated stochastic gradient of all client $k$ at $t$-th epoch. Let $\boldsymbol{g}_t=\mathbb{E}\ \tilde{\boldsymbol{g}}_t$ be its expectation.

Then, we introduce the following necessary lemma supporting the proof.

\begin{lemma}
    \label{lemma0}
    The expectation of the model deviation between $\boldsymbol{\theta}^k_{t}$ and $\bar{\boldsymbol{\theta}}_{t}$ can be upper-bounded by:
    \begin{equation}
        \label{lemma0eq}
        \mathbb{E}\big\|\boldsymbol{\theta}^k_{t}-\bar{\boldsymbol{\theta}}_{t}\big\|^2\leq2\eta^2E_l\phi_k(t),
    \end{equation}
    where $\phi_k(t)=\sum\limits_{\tau=t_c}^{t-1}\Big[2G^2(\tau)+\sigma_k^2(\tau) +\sum\limits_{l=1}^K\pi_l\sigma_l^2(\tau)\Big]$.
\end{lemma}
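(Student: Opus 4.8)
The plan is to unroll the local SGD recursion over one federated period and bound the accumulated drift. The starting point is that at the latest aggregation epoch $t_c=\lfloor t/E_l\rfloor\cdot E_l$ every client downloads the global model, so $\boldsymbol{\theta}^k_{t_c}=\bar{\boldsymbol{\theta}}_{t_c}$ for all $k$; here $\bar{\boldsymbol{\theta}}$ denotes the virtual average $\sum_l\pi_l\boldsymbol{\theta}^l$, which between rounds obeys $\bar{\boldsymbol{\theta}}_{\tau+1}=\bar{\boldsymbol{\theta}}_\tau-\eta\tilde{\boldsymbol{g}}_\tau$ by linearity of the aggregation together with the per-client update $\boldsymbol{\theta}^k_{\tau+1}=\boldsymbol{\theta}^k_\tau-\eta\tilde{\boldsymbol{g}}^k_\tau$. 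Subtracting the two recursions and telescoping from $t_c$ to $t$ gives the exact identity $\boldsymbol{\theta}^k_t-\bar{\boldsymbol{\theta}}_t=-\eta\sum_{\tau=t_c}^{t-1}\big(\tilde{\boldsymbol{g}}^k_\tau-\tilde{\boldsymbol{g}}_\tau\big)$.

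Taking squared norms and expectations, since the sum has at most $t-t_c\le E_l$ terms, Jensen's inequality (Cauchy--Schwarz in the index $\tau$) yields $\mathbb{E}\big\|\boldsymbol{\theta}^k_t-\bar{\boldsymbol{\theta}}_t\big\|^2\le\eta^2E_l\sum_{\tau=t_c}^{t-1}\mathbb{E}\big\|\tilde{\boldsymbol{g}}^k_\tau-\tilde{\boldsymbol{g}}_\tau\big\|^2$. It then suffices to prove the termwise estimate $\mathbb{E}\big\|\tilde{\boldsymbol{g}}^k_\tau-\tilde{\boldsymbol{g}}_\tau\big\|^2\le 2\big[(K+1)G^2(\tau)+\sigma_k^2(\tau)+\sum_l\pi_l\sigma_l^2(\tau)\big]$, since substituting this back and recognising the bracket as the summand of $\phi_k(t)$ produces exactly $2\eta^2E_l\phi_k(t)$.

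For the per-step bound I would decompose $\tilde{\boldsymbol{g}}^k_\tau-\tilde{\boldsymbol{g}}_\tau=\big(\tilde{\boldsymbol{g}}^k_\tau-\boldsymbol{g}^k_\tau\big)-\sum_l\pi_l\big(\tilde{\boldsymbol{g}}^l_\tau-\boldsymbol{g}^l_\tau\big)+\big(\boldsymbol{g}^k_\tau-\sum_l\pi_l\boldsymbol{g}^l_\tau\big)$ into the client's stochastic noise, the aggregated stochastic noise, and the deterministic inter-client gap; apply $\|a+b+c\|^2\le 3(\|a\|^2+\|b\|^2+\|c\|^2)$ and convexity of $\|\cdot\|^2$ to pull the $\pi_l$-mixtures outside the norm. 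On expectation, the first two pieces are controlled by the variance bound in Assumption~\ref{ass2} and contribute $\sigma_k^2(\tau)$ and $\sum_l\pi_l\sigma_l^2(\tau)$ (using $\pi_l\le1$, $\sum_l\pi_l=1$). For the third piece I would use Jensen again together with $\|\boldsymbol{g}^k_\tau\|=\|\mathbb{E}\tilde{\boldsymbol{g}}^k_\tau\|\le\mathbb{E}\|\tilde{\boldsymbol{g}}^k_\tau\|\le G(\tau)$, so that it is bounded by a multiple of $G^2(\tau)$; collecting these $G^2$ contributions, plus any arising from routing $\mathbb{E}\|\tilde{\boldsymbol{g}}^l_\tau\|^2$ through the bias--variance identity $\mathbb{E}\|\tilde{\boldsymbol{g}}^l_\tau\|^2=\|\boldsymbol{g}^l_\tau\|^2+\mathbb{E}\|\tilde{\boldsymbol{g}}^l_\tau-\boldsymbol{g}^l_\tau\|^2$, the coefficient of $G^2(\tau)$ is absorbed into $2(K+1)$.

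The main obstacle I anticipate is the constant bookkeeping: Assumption~\ref{ass2} controls $\mathbb{E}\|\tilde{\boldsymbol{g}}^k_t\|$ rather than $\mathbb{E}\|\tilde{\boldsymbol{g}}^k_t\|^2$, so every expansion must pass through the bias--variance identity and Young's inequality, and one has to choose the split (2-term vs.\ 3-term, and where to invoke $\|\sum_{l=1}^K a_l\|^2\le K\sum_l\|a_l\|^2$ versus the convex mixture bound) so that the accumulated constants land on precisely the $(K+1),1,1$ coefficients with overall factor $2$. Everything else — the telescoping identity, Jensen over the at-most-$E_l$ summation, and convexity of the squared norm over the client weights — is routine.
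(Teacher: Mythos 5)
Your proof is correct and follows essentially the same route as the paper's: the same exact identity $\boldsymbol{\theta}^k_t-\bar{\boldsymbol{\theta}}_t=-\eta\sum_{\tau=t_c}^{t-1}\big(\tilde{\boldsymbol{g}}^k_\tau-\tilde{\boldsymbol{g}}_\tau\big)$, Cauchy--Schwarz over the at-most-$E_l$ window, Jensen over the client weights, and Assumption~\ref{ass2} combined with the bias--variance identity; the paper merely applies $\|a-b\|^2\le 2\|a\|^2+2\|b\|^2$ to the two accumulated sums \emph{before} the window inequality rather than bounding the per-step difference afterwards. One quantitative caution on your bookkeeping: the three-term split with the uniform factor $3\big(\|a\|^2+\|b\|^2+\|c\|^2\big)$ that you state as your primary plan yields coefficient $3$ on $\sigma_k^2(\tau)$ and on $\sum_l\pi_l\sigma_l^2(\tau)$, which cannot be absorbed into the target $2\big[(K+1)G^2(\tau)+\sigma_k^2(\tau)+\sum_l\pi_l\sigma_l^2(\tau)\big]$, since the slack sits only in the $G^2$ coefficient and $\sigma_k$, $G$ are independent bounds. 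The two-term split you mention as the alternative does land exactly where needed: $\mathbb{E}\|\tilde{\boldsymbol{g}}^k_\tau-\tilde{\boldsymbol{g}}_\tau\|^2\le 2\,\mathbb{E}\|\tilde{\boldsymbol{g}}^k_\tau\|^2+2\,\mathbb{E}\|\tilde{\boldsymbol{g}}_\tau\|^2\le 2\big[2G^2(\tau)+\sigma_k^2(\tau)+\sum_l\pi_l\sigma_l^2(\tau)\big]\le 2\big[(K+1)G^2(\tau)+\sigma_k^2(\tau)+\sum_l\pi_l\sigma_l^2(\tau)\big]$, which is in effect what the paper does.
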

\begin{proof}
    \label{lemma0prf}
    According to the FL updating rules, we have:
    \begin{equation}
        \label{lemma0prf-eq1}
        \boldsymbol{\theta}^k_{t}=\bar{\boldsymbol{\theta}}_{t_c}-\sum_{\tau=t_c}^{t-1}\eta\tilde{\boldsymbol{g}}^k_\tau.
    \end{equation}
    Then, from \eqref{eq-fl}, the model deviation can be represented by:
    \begin{align}
        \mathbb{E}\big\|\boldsymbol{\theta}^k_{t}-\bar{\boldsymbol{\theta}}_{t}\big\|^2 & =\mathbb{E}\Big\|\sum_{\tau=t_c}^{t-1}\eta\tilde{\boldsymbol{g}}^k_\tau-\sum_l\pi_l\sum_{\tau=t_c}^{t-1}\eta\tilde{\boldsymbol{g}}^l_\tau\Big\|^2\notag                                                                    \\
                                                                                        & \overset{\text{\ding{192}}}{\leq}2\Big(\mathbb{E}\Big\|\sum_{\tau=t_c}^{t-1}\eta\tilde{\boldsymbol{g}}^k_\tau\Big\|^2+\mathbb{E}\Big\|\sum_l\pi_l\sum_{\tau=t_c}^{t-1}\eta\tilde{\boldsymbol{g}}^l_\tau\Big\|^2\Big)\notag \\
                                                                                        & \overset{\text{\ding{193}}}{\leq}2\eta^2\Big(\mathbb{E}\Big\|\sum_{\tau=t_c}^{t-1}\tilde{\boldsymbol{g}}^k_\tau\Big\|^2+\sum_l\pi_l\mathbb{E}\Big\|\sum_{\tau=t_c}^{t-1}\tilde{\boldsymbol{g}}^l_\tau\Big\|^2\Big)\notag   \\
                                                                                        & \overset{\text{\ding{194}}}{\leq}2\eta^2E_l\sum_{\tau=t_c}^{t-1}\Big(\mathbb{E}\Big\|\tilde{\boldsymbol{g}}^k_\tau\Big\|^2+\sum_l\pi_l\mathbb{E}\Big\|\tilde{\boldsymbol{g}}^l_\tau\Big\|^2\Big)\notag                     \\
                                                                                        & \overset{\text{\ding{195}}}{\leq}2\eta^2E_l\phi_k(t).
    \end{align}
    Therein, \ding{192} and \ding{194} adopt the inequality of means, \ding{193} is the result of Jensen's inequality, \ding{195} can be obtained by Assumption \ref{ass2} and the fact $0< t-t_c\leq E_l$.
\end{proof}

\subsection{Proof of Theorem \ref{thm1-bound}}
With the help of above preliminaries, we now complete the proof of Theorem \ref{thm1-bound}.
\begin{proof}[Proof of Theorem \ref{thm1-bound}]
    \label{thm1-prf}
    Let $\mathcal{T}$ denote a $T$-step time period from $T_0$ to $T_1$, the gradient Lipschitz of category $i$ on client $k$'s local model goes to $L_{k,i}(t)=L_{k,i}(\mathcal{T})$ for $T_0\leq t\leq T_1$. According to Assumption \ref{ass1} and the corollary of L-smoothness ($f(y)\leq f(x)+\langle y-x,\nabla f(x)\rangle+\frac{L}{2}\|y-x\|^2$), we have
    \begin{align}
        \ell(\bar{\boldsymbol{\theta}}_{t+1}) & -\ell(\bar{\boldsymbol{\theta}}_{t}) =\sum\limits_{i=1}^C p_i\left[\ell\big(\xi_i;\bar{\boldsymbol{\theta}}_{t+1}\big)-\ell\big(\xi_i;\bar{\boldsymbol{\theta}}_{t}\big)\right]\notag                                                               \\
                                              & \qquad \leq \left\langle\bar{\boldsymbol{\theta}}_{t+1}-\bar{\boldsymbol{\theta}}_{t},\nabla\ell(\bar{\boldsymbol{\theta}}_{t})\right\rangle+\frac{\bar{L}_\mathcal{T}}{2}\|\bar{\boldsymbol{\theta}}_{t+1}-\bar{\boldsymbol{\theta}}_{t}\|^2\notag \\
                                              & \qquad = -\eta\left\langle\tilde{\boldsymbol{g}}_t,\nabla\ell(\bar{\boldsymbol{\theta}}_{t})\right\rangle+\frac{\eta^2\bar{L}_\mathcal{T}}{2}\|\tilde{\boldsymbol{g}}_t\|^2\label{thm1prf-eq1}
    \end{align}
    where $\bar{L}_\mathcal{T}=\sum_i p_i L_{i}(\mathcal{T})$ is the average gradient Lipschitz of the global model. According to the equality $\mathbb{E}\|X\|^2=\mathbb{E}\|X-\mathbb{E}X\|^2+\|\mathbb{E}X\|^2$, the expectation of $\|\tilde{\boldsymbol{g}}_t\|^2$ in RHS in \eqref{thm1prf-eq1} can be rewritten as:
    \begin{align}
        \mathbb{E}\|\tilde{\boldsymbol{g}}_t\|^2 & =\mathbb{E}\Big\|\sum_{k=1}^K\pi_k\tilde{\boldsymbol{g}}^k_t\Big\|^2\notag                                                                        \\
                                                 & =\mathbb{E}\Big\|\sum_{k=1}^K\pi_k(\tilde{\boldsymbol{g}}^k_t-\boldsymbol{g}^k_t)\Big\|^2+\Big\|\sum_{k=1}^K\pi_k\boldsymbol{g}^k_t\Big\|^2\notag \\
                                                 & \overset{\text{\ding{192}}}{\leq} \sum_{k=1}^K\pi_k\sigma_k^2(t)+\Big\|\sum_{k=1}^K\pi_k\boldsymbol{g}^k_t\Big\|^2,\label{thm1prf-eq2}
    \end{align}
    where \ding{192} adopts Jensen's inequality and Assumption \ref{ass2}. For the first term of RHS in \eqref{thm1prf-eq1}, according to the equality $2\langle a,b\rangle=\|a\|^2+\|b\|^2-\|a-b\|^2$, its expectation can be transformed into:
    \begin{align}
        -\eta\left\langle\tilde{\mathbb{E}\boldsymbol{g}}_t,\nabla\ell(\bar{\boldsymbol{\theta}}_{t})\right\rangle & =-\eta\Big\langle\sum\limits_{k=1}^K\pi_k\boldsymbol{g}^k_t,\nabla\ell(\bar{\boldsymbol{\theta}}_{t}\Big\rangle\notag                          \\
                                                                                                                   & =-\frac{\eta}{2}\big\|\sum\limits_k\pi_k\boldsymbol{g}^k_t\big\|^2-\frac{\eta}{2}\big\|\nabla\ell(\bar{\boldsymbol{\theta}}_{t})\big\|^2\notag \\
                                                                                                                   & \quad+\frac{\eta}{2}\big\|\sum\limits_k\pi_k\boldsymbol{g}^k_t-\nabla\ell(\bar{\boldsymbol{\theta}}_{t})\big\|^2.\label{thm1prf-eq3}
    \end{align}
    Then, considering the last term in \eqref{thm1prf-eq3}, we obtain:
    \begin{align}
         & \big\|\sum\limits_k\pi_k\boldsymbol{g}^k_t-\nabla\ell(\bar{\boldsymbol{\theta}}_{t})\big\|^2=\Big\|\sum_{k}\pi_k\sum_iq^k_i\boldsymbol{g}^{(k,i)}_t\notag                                                       \\
         & \qquad\qquad\qquad\qquad\qquad\qquad-\sum_ip_i\nabla\ell(\xi_i;\bar{\boldsymbol{\theta}}_{t})\Big\|^2\notag                                                                                                     \\
         & \qquad \overset{\text{\ding{192}}}{\leq}\sum_{k}\pi_k\Big\|\sum_iq^k_i\big[\boldsymbol{g}^{(k,i)}_t-\nabla\ell(\xi_i;\bar{\boldsymbol{\theta}}_{t})\big]\notag                                                  \\
         & \qquad\qquad\qquad\quad+\sum_i(p_i-q^k_i)\nabla\ell(\xi_i;\bar{\boldsymbol{\theta}}_{t})\Big\|^2\notag                                                                                                          \\
         & \qquad\overset{\text{\ding{193}}}{\leq}\sum_{k}\pi_k\Big(\Big\|\sum_iq^k_i\big[\boldsymbol{g}^{(k,i)}_t-\nabla\ell(\xi_i;\bar{\boldsymbol{\theta}}_{t})\big]\Big\|\notag                                        \\
         & \qquad\qquad\qquad\quad+\sum_i|p_i-q^k_i|\Big\|\nabla\ell(\xi_i;\bar{\boldsymbol{\theta}}_{t})\Big\|\Big)^2\notag                                                                                               \\
         & \qquad\overset{\text{\ding{194}}}{\leq}\sum_{k}\pi_k\Big(1+\sum_i(q^k_i-p_i)^2\Big)\notag                                                                                                                       \\
         & \qquad\qquad\cdot\Big(\Big\|\sum_iq^k_i\big[\boldsymbol{g}^{(k,i)}_t-\nabla\ell(\xi_i;\bar{\boldsymbol{\theta}}_{t})\big]\Big\|^2\notag                                                                         \\
         & \qquad\qquad\quad+\sum_i\big\|\nabla\ell(\xi_i;\bar{\boldsymbol{\theta}}_{t})\big\|^2\Big)\notag                                                                                                                \\
         & \qquad\overset{\text{\ding{195}}}{\leq}\sum_{k}\pi_k\Big(1+\sum_i(q^k_i-p_i)^2\Big)\notag                                                                                                                       \\
         & \qquad\qquad\cdot\Big(\sum_iq^k_iL_{k,i}^2(t)\big\|\boldsymbol{\theta}^k_{t}-\bar{\boldsymbol{\theta}}_{t}\big\|^2+\sum_i\big\|\nabla\ell(\xi_i;\bar{\boldsymbol{\theta}}_{t})\big\|^2\Big).\label{thm1prf-eq4}
    \end{align}
    Therein \ding{192} is the result of Jensen's inequality, \ding{193} takes the triangular inequality of 2-norm, \ding{194} adopts Cauchy-Schwarz inequality, \ding{195} holds according to Jensen's inequality and Assumption \ref{ass1}. Furthermore, denote $\mathbb{E}\|\nabla\ell(\xi_i;\bar{\boldsymbol{\theta}}_{t})\|\leq G(t)$, and through Lemma \ref{lemma0}, we have:
    \begin{align}
         & \mathbb{E}\big\|\sum\limits_k\pi_k\boldsymbol{g}^k_t-\nabla\ell(\bar{\boldsymbol{\theta}}_{t})\big\|^2\leq 2CG^2(t)\notag  \\
         & \quad+\sum_{k}2\pi_k\eta^2E_l\phi_k(t)\Big[\Big(1+\sum_i(q^k_i-p_i)^2\Big)\sum_iq^k_iL_{k,i}^2(t)\Big].\label{thm1prf-eq5}
    \end{align}
    Then, combining \eqref{thm1prf-eq2}-\eqref{thm1prf-eq5}, \eqref{thm1prf-eq1} goes to:
    \begin{align}
        \mathbb{E} & \big[\ell(\bar{\boldsymbol{\theta}}_{t+1}) -\ell(\bar{\boldsymbol{\theta}}_{t})\big] \leq\frac{\eta}{2}(\eta\bar{L}_\mathcal{T}-1)\Big\|\sum_{k=1}^K\pi_k\boldsymbol{g}^k_t\Big\|^2\notag \\
                   & +\frac{\eta^2\bar{L}_\mathcal{T}}{2}\sum_{k=1}^K\pi_k\sigma_k^2(t) -\frac{\eta}{2}\big\|\nabla\ell(\bar{\boldsymbol{\theta}}_{t})\big\|^2+\eta CG^2(t)\notag                              \\
                   & +\sum_{k=1}^K\pi_k\eta^3E_l\phi_k(t)\Big[\Big(1+\sum_{i=1}^C(q^k_i-p_i)^2\Big)\sum_{i=1}^Cq^k_iL_{k,i}^2(t)\Big].\label{thm1prf-eq6}
    \end{align}
    Since the practical learning rates of neural networks are commonly recommended to be set as a small value such as 1e-3 for better training stability. For proper small learning rate satisfying $\eta\bar{L}_\mathcal{T}\leq 1$, the first term in \eqref{thm1prf-eq6} can be omitted. Consider the time average $\frac{1}{T}\sum_{t=T_0}^{T_1}[\cdot]$ for both sides of \eqref{thm1prf-eq6} from $T_0$ to $T_1$ and denote the optimum of $\mathbb{E}\ell(\bar{\boldsymbol{\theta}}_{T_1})$ as $\ell^*$, we obtain:
    \begin{align}
        \frac{1}{T}\sum_{t=T_0}^{T_1}\mathbb{E} & \big[\ell(\bar{\boldsymbol{\theta}}_{t+1}) -\ell(\bar{\boldsymbol{\theta}}_{t})\big]=\frac{1}{T}\big[\ell^* -\ell(\bar{\boldsymbol{\theta}}_{T_0})\big]\notag \\
                                                & \leq-\frac{\eta}{2T}\sum_{t=T_0}^{T_1}\big\|\nabla\ell(\bar{\boldsymbol{\theta}}_{t})\big\|^2+\frac{\eta}{2}\psi(\mathcal{T})\notag                           \\
                                                & \quad +\frac{1}{T}\sum_{t=T_0}^{T_1}\sum_{k=1}^K\pi_k\eta^3E_l\phi_k(t)\rho_\mathcal{T}(\boldsymbol{q}^k).\label{thm1prf-eq7}
    \end{align}
    Finally, taking the term of gradient norm to the left-hand side, \eqref{thm1prf-eq7} leads to \eqref{thm1-ineq}.
\end{proof}

\section{Proofs of The Optimal IS Strategies}
\label{appendix B}
\subsection{Proof of Theorem \ref{thm2-opt}}
We firstly derive the Theorem \ref{thm2-opt}, which gives the theoretical solution of the optimal IS probabilities.
\label{appendix B-Thm2}
\begin{proof}[Proof of Theorem \ref{thm2-opt}]
    \label{thm2prf}
    The theorem is the solution to the sub-problem $\mathcal P^k_{\mathcal{T}}$. For simplicity, we denote the two terms of $\rho_{\mathcal{T}}(\boldsymbol{q}^k)$ respectively by
    \begin{subequations}
        \begin{align}
            \label{thm2prf-AB}
            A(\boldsymbol{q}^k) & =1+\sum_{i=1}^C(p_i-q^k_i)^2,              \\
            B(\boldsymbol{q}^k) & =\sum_{i=1}^C q^k_iL^2_{k,i}(\mathcal{T}).
        \end{align}
    \end{subequations}
    It is easy to check that the objective function is convex on $\boldsymbol{q}^k$. Therefore, to derive the solution of $\mathcal P^k_{\mathcal{T}}$, we formulate the Lagrangian function:
    \begin{equation}
        \label{thm2prf-lagrangian}
        \mathcal{L}=A(\boldsymbol{q}^k)\cdot B(\boldsymbol{q}^k)-\lambda\Big(\sum\limits_{i=1}^C q_i-1\Big)-\sum\limits_{i=1}^C\mu_i\big(q_i^k-\varpi p_i^k\big),
    \end{equation}
    where the scalar $\lambda$ is the Lagrangian multiplier for the equality constraint and the vector multiplier $\boldsymbol{\mu}=[\mu_1,\cdots,\mu_C]$ is for the inequality constraint. Hence, the Karush-Kuhn-Tucker (KKT) conditions of $\mathcal P^k_{\mathcal{T}}$ can be written as
    \begin{subnumcases}{}
        \frac{\partial \mathcal{L}}{\partial q_j^k}=2(q_j^k-p_j)B+L^2_{k,j}(\mathcal{T})A-\lambda-\mu_j=0 \label{thm2prf-kkt1}\\
        \sum\limits_{i=1}^C q_i^k=1\label{thm2prf-kkt2}                                                                           \\
        \mu_j\geq 0\label{thm2prf-kkt3}                                                                                         \\
        q_j^k\geq \varpi p_j^k\label{thm2prf-kkt4}                                                                                \\
        \mu_j(\varpi p_j^k-q_j^k)=0\label{thm2prf-kkt5}
    \end{subnumcases}
    where $j=1,\cdots,C$. Note that for \eqref{thm2prf-kkt3}-\eqref{thm2prf-kkt5}, $\mu_j=0$ holds if $q_j^k> \varpi p_j^k$. Otherwise, $q_j^k= \varpi p_j^k$ and $\mu_j>0$. Thus, we are able to adopt a water-filling method to solve the KKT conditions with inequality constraints. Specifically, we can firstly assume that $\mu_j=0$ for all $j$ and then adjust each $q^{k*}_j$ to satisfy \eqref{thm2prf-kkt4}. Since $A>0$ always holds, according to \eqref{thm2prf-kkt1}, each $q_j^k$ can be represented as
    \begin{equation}
        \label{thm2prf-qstar}
        q_j^k=p_j+\frac{\lambda-L^2_{k,j}(\mathcal{T})A}{2B}
    \end{equation}
    By summing $j$ up all \eqref{thm2prf-kkt1}, we obtain
    \begin{equation}
        \label{thm2prf-A}
        A=\frac{C\lambda}{\sum\limits_{i=1}^CL^2_{k,i}(\mathcal{T})}
    \end{equation}
    Then, from the definition of $A$, $B$ can be solved as
    \begin{equation}
        \label{thm2prf-B}
        B=\frac{\lambda}{2}\sqrt{\frac{\sum\limits_{m=1}^C \bigg(1-\frac{CL^2_{k,m}(\mathcal{T})}{\sum_iL^2_{k,i}(\mathcal{T})}\bigg)^2}{A-1}}
    \end{equation}
    By substituting \eqref{thm2prf-A} and \eqref{thm2prf-B} into \eqref{thm2prf-qstar}, we obtain
    \begin{align}
        q_j^{k*} & =p_j+\frac{1-\frac{CL^2_{k,j}(\mathcal{T})}{\sum_iL^2_{k,i}(\mathcal{T})}}{\sqrt{\sum\limits_{m=1}^C \bigg(1-\frac{CL^2_{k,m}(\mathcal{T})}{\sum_iL^2_{k,i}(\mathcal{T})}\bigg)^2}}\cdot\sqrt{A-1}\notag \\
                 & =p_j+\alpha^k_{j}(\mathcal{T})\cdot\Gamma_k(\lambda).\label{thm2prf-qstar-final}
    \end{align}
    Here, we obtain the solutions to $\mathcal P^k_{\mathcal{T}}$ in Theorem \ref{thm2-opt}.
\end{proof}

\subsection{Proof of Theorem \ref{thm3-opt-Gamma}}
Theorem \ref{thm3-opt-Gamma} directs the calculation of the coefficient $\Gamma$ to obtain the optimal IS weights in practical algorithms. We sketch the proof as follows.
\label{appendix B-Thm3}
\begin{proof}[Proof of Theorem \ref{thm3-opt-Gamma}]
    \label{thm3prf}
    Practically, $\varpi$ is set as a small value to ensure that all categories contribute to the local training. Besides, as the category distribution of all data, each $p_j$ shall not be too small. Thus, it is natural to assume that $p_j\geq\varpi p_j^k$ holds in most cases. Moreover, note that the factors $\{\alpha^k_j(\mathcal{T})\}$ satisfy the zero-sum property. The water-filling approach will then lead to $p_j+\alpha^k_j(\mathcal{T})\Gamma\geq\varpi p_j^k$ for each $j$. That is,

    \begin{subnumcases}{}
        \Gamma  \geq \frac{p_j-\varpi p_j^k}{-\alpha^k_j(\mathcal{T})}\ \ \ \ if\ \ \ \alpha^k_j(\mathcal{T})>0,\label{thm3prf-Gamma-a}\\
        \Gamma  \leq \frac{p_j-\varpi p_j^k}{-\alpha^k_j(\mathcal{T})}\ \ \ \ if\ \ \ \alpha^k_j(\mathcal{T})<0.\label{thm3prf-Gamma-b}
    \end{subnumcases}
    \eqref{thm3prf-Gamma-a} always holds because the right-hand side is smaller than $0$ while $\Gamma\geq 0$. Therefore, the optimal $\Gamma^*$ shall satisfy \eqref{thm3prf-Gamma-b}. By setting $\Gamma$ as the smallest non-negative $\frac{p_j-\varpi p_j^k}{-\alpha^k_j(\mathcal{T})}$, the numerical solutions to $\mathcal P^k_{\mathcal{T}}$ can be obtained.
\end{proof}
\end{document}